  \providecommand\BibTeX{{%
    \normalfont B\kern-0.5em{\scshape i\kern-0.25em b}\kern-0.8em\TeX}}}
\newtheorem{theorem}{Theorem}
\newtheorem{lemma}{Lemma}[section]
\newtheorem{definition}{Definition}
\newtheorem{assumption}{Assumption}
\def\sref#1{\S~\ref{#1}}
\def\eqref#1{equation~\ref{#1}}
\def\Eqref#1{Eq.(\ref{#1})}
\def\1{\bm{1}}
\def\vx{{\bm{x}}}
\def\mX{{\bm{X}}}
\DeclareMathAlphabet{\mathsfit}{\encodingdefault}{\sfdefault}{m}{sl}
\SetMathAlphabet{\mathsfit}{bold}{\encodingdefault}{\sfdefault}{bx}{n}
\newif\ifkdd
\newcommand{\stitle}[1]{\vspace{2mm} \noindent {\bf #1}}
\begin{document}

\title{Calibration of Time-Series Forecasting:
\\Detecting and Adapting Context-Driven Distribution Shift}

\renewcommand{\shorttitle}{Calibration of Time-Series Forecasting: Detecting and Adapting Context-Driven Distribution Shift}

\author{Mouxiang Chen}
\authornote{Both authors contributed equally to this research.}
\affiliation{
  \institution{Zhejiang University}
  \city{Hangzhou}
  \country{China}}
\email{chenmx@zju.edu.cn}

\author{Lefei Shen}
\authornotemark[1]
\affiliation{
  \institution{Zhejiang University}
  \city{Hangzhou}
  \country{China}}
\email{lefeishen@zju.edu.cn}

\author{Han Fu}
\affiliation{
  \institution{Zhejiang University}
  \city{Hangzhou}
  \country{China}}
\email{11821003@zju.edu.cn}

\author{Zhuo Li}
\authornote{Corresponding authors.}
\affiliation{
  \institution{State Street Technology \\ (Zhejiang) Ltd.}
  \city{Hangzhou}
  \country{China}}
\email{lizhuo@zju.edu.cn}

\author{Jianling Sun}
\affiliation{
  \institution{Zhejiang University}
  \city{Hangzhou}
  \country{China}}
\email{sunjl@zju.edu.cn}

\author{Chenghao Liu}
\authornotemark[2]
\affiliation{
  \institution{Salesforce Research Asia}
  \country{Singapore}}
\email{chenghao.liu@salesforce.com}

\newcommand{\eg}{{\it e.g.}}
\newcommand{\etal}{{\it et al.}}
\newcommand{\ie}{{\it i.e.}}
\newcommand\todo[1]{\textcolor{blue}{#1}}
\newcommand\method[1]{\textsf{#1}}

\newcommand\remove[1]{\textcolor{blue}{(#1)}}
\newcommand\add[1]{\textcolor{red}{#1}}

\begin{abstract}
Recent years have witnessed the success of introducing deep learning models to time series forecasting. From a data generation perspective, we illustrate that existing models are susceptible to distribution shifts driven by temporal contexts, whether observed or unobserved. Such context-driven distribution shift (CDS) introduces biases in predictions within specific contexts and poses challenges for conventional training paradigms. In this paper, we introduce a universal calibration methodology for the detection and adaptation of CDS with a trained model. To this end, we propose a novel CDS detector, termed the "residual-based CDS detector" or "\method{Reconditionor}", which quantifies the model's vulnerability to CDS by evaluating the mutual information between prediction residuals and their corresponding contexts. A high \method{Reconditionor} score indicates a severe susceptibility, thereby necessitating model adaptation. In this circumstance, we put forth a straightforward yet potent adapter framework for model calibration, termed the "sample-level contextualized adapter" or "\method{SOLID}". This framework involves the curation of a contextually similar dataset to the provided test sample and the subsequent fine-tuning of the model's prediction layer with a limited number of steps. Our theoretical analysis demonstrates that this adaptation strategy can achieve an optimal bias-variance trade-off. Notably, our proposed \method{Reconditionor} and \method{SOLID} are model-agnostic and readily adaptable to a wide range of models. Extensive experiments show that \method{SOLID} consistently enhances the performance of current forecasting models on real-world datasets, especially on cases with substantial CDS detected by the proposed \method{Reconditionor}, thus validating the effectiveness of the calibration approach. 
\end{abstract}

\begin{CCSXML}
<ccs2012>
   <concept>
       <concept_id>10010147.10010257</concept_id>
       <concept_desc>Computing methodologies~Machine learning</concept_desc>
       <concept_significance>500</concept_significance>
       </concept>
   <concept>
       <concept_id>10002951.10003227.10003351</concept_id>
       <concept_desc>Information systems~Data mining</concept_desc>
       <concept_significance>500</concept_significance>
       </concept>
 </ccs2012>
\end{CCSXML}

\ccsdesc[500]{Computing methodologies~Machine learning}
\ccsdesc[500]{Information systems~Data mining}

\keywords{time series forecasting, distribution shift, context-driven distribution shift}

\maketitle

\section{Introduction}
Time Series Forecasting (TSF) plays a pivotal role in numerous real-world applications, including energy consumption planning \cite{TSF_Energy_1, TSF_Energy_2}, weather forecasting \cite{TSF_Weather_1, TSF_Weather_2}, financial risk assessment \cite{TSF_Finance_1, TSF_Economics_1}, and web recommendation \cite{TSF_Web_1, TSF_Web_2, TSF_Web_3}.
Recent years have witnessed the progress of introducing time series forecasting model \cite{Informer, Autoformer, FEDformer, ETSformer, Crossformer} to better capture the temporal dependencies by extracting and stacking multi-level features. 
Despite the remarkable architecture design, the distribution shift \cite{distribution-shift} has become an unavoidable yet highly challenging issue, which engenders suboptimal performance and hampers generalization with a fluctuating distribution.

\begin{figure}[t]
    \centering
    \subfigure[causal graph]{
        \label{fig:intro-causal}
        \includegraphics[width=0.22\textwidth]{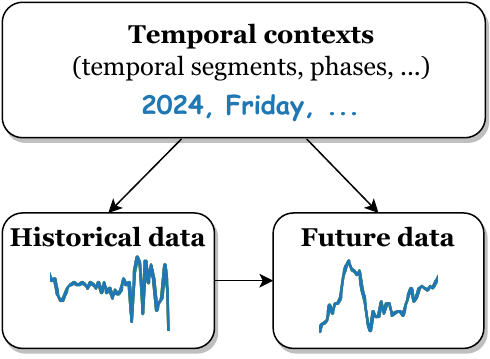}
    }
    \subfigure[(conditional) residual distribution]{
        \label{fig:intro-residuals}
        \includegraphics[width=0.22\textwidth,trim={10 10 10 10},clip]{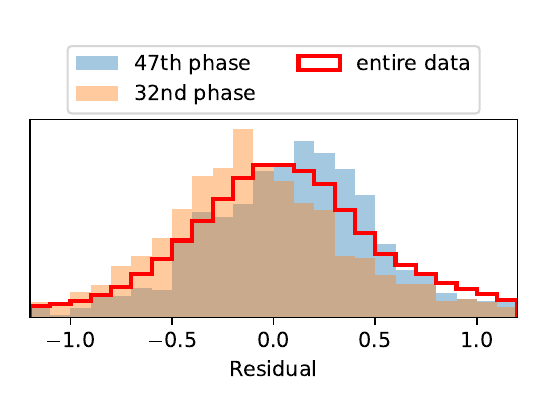}
    }
    \caption{(a) Causal graph in the presence of context-driven distribution shift. (b) Impact of CDS: Autoformer's residual distribution on the entire Illness dataset, along with the residual distributions conditioned on two different periodic phases. }
    \label{fig:intro}
    \vspace{-0.4cm}
\end{figure}

Generally, distribution shift signifies variations in the underlying data generation process, which is typically driven by some temporal observed or unobserved factors, namely \textit{contexts}. In this paper, we reveal two significant observed contexts within time series data: \textbf{temporal segments} (\ie, different temporal stages as the time evolution),  and \textbf{periodic phases} (\ie, the fraction of the period covered up to the current time), along with other \textbf{unobserved contexts}. 
For example, in the scenario of electricity consumption, factors like economic trends over the years (temporal segments) and seasonal fluctuations (periodic phases) can affect electricity usage. We visualized the impact of these two contexts on data distribution in Appendix \ref{sec:appendix_context_visual}.
Moreover, sudden policy changes (unobserved contexts) can also affect the usage. We refer to this phenomenon as \textit{context-driven distribution shift}, or \textit{CDS}.

In the presence of CDS, TSF models remain constrained owing to their ignorance of contexts. Firstly, the training and testing datasets are often generated under distinct contexts (\ie, temporal segments). This deviation from the conventional assumption of consistent dataset distributions between training and testing data can lead to suboptimal prediction results. Secondly, even within the training set, these contexts essentially function as confounders \cite{pearl2009causality} --- factors that simultaneously influence the historical and future data, as demonstrated in Figure \ref{fig:intro-causal}. Such confounders lead trained models to capture spurious correlations, causing them to struggle with generalizing to data from new distributions. 

To present the impact of CDS in practice, we trained an Autoformer \cite{Autoformer} on \textit{Illness} and assessed its ability to fit sub-data in different periodic phases. Residuals, the difference between a model's prediction and the ground truth that can reflect the goodness of fitting, were analyzed for the 47th and 32nd periodic phases, and the entire dataset. The residual results are visualized in Figure \ref{fig:intro-residuals}. Notably, it can be observed that the model provides an \textit{unbiased} estimation for the entire training dataset (\ie, the mean of residuals is zero). However, the estimation within two specific contexts is \textit{biased} since both their means of residuals deviate from zero. This observation underscores the model's limitations in fitting sub-data within a context, as it is susceptible to data from other contexts and learns spurious correlations. It motivates us to calibrate the model to achieve more accurate estimation within each context.

\begin{figure}[t]
    \centering
    \includegraphics[width=0.45\textwidth]{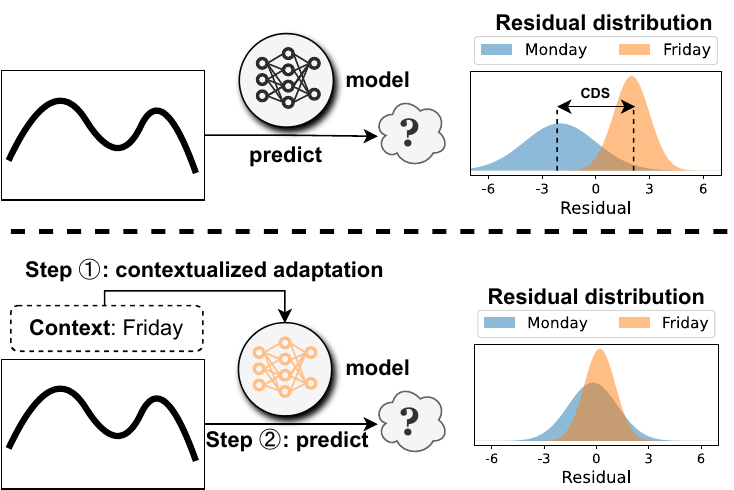}
    \caption{Illustrations of the traditional framework (top) and the proposed framework (bottom). By calibrating the model via contextualized adaptation before making each prediction, the context-driven distribution shift (CDS) can be alleviated.}
    \label{fig:intro-method}
\end{figure}

\stitle{Present work.} In this paper, we introduce a general calibration approach to detect and adapt to CDS with a trained model. Specifically, we first propose a metric to measure the severity of model's susceptibility to CDS within the training data, namely \textbf{\underline{Re}sidual-based \underline{con}text-driven \underline{di}stribu\underline{tion} shift detect\underline{or}} or \method{Reconditionor}, by measuring the mutual information between residuals and observed contexts to quantify the impact of contexts on model's prediction.

A higher value from \method{Reconditionor} signifies a stronger CDS. Under this circumstance, we further propose a simple yet effective adapter framework for further calibration. Given the inherent variability in contexts across data samples, a one-size-fits-all adaptation of the model is inherently unfeasible. Hence, we posit the need to fine-tune the model at the individual \textit{sample-level}. Figure \ref{fig:intro-method} illustrates the comparison between the traditional method and our proposed sample-level adaptation framework. Notably, for each test sample, adapting the model solely based on that single instance is intractable. As an alternative, we initiate a data augmentation process by curating a dataset comprising preceding samples characterized by akin contexts.
Since the chosen samples can introduce significant variance during the adaptation process, we restrict the fine-tuning to the model's prediction layer with a limited number of steps. Our theoretical findings substantiate that this approach can attain an optimal bias-variance trade-off.
We refer to this framework as \textbf{\underline{S}ample-level c\underline{O}ntextua\underline{LI}zed a\underline{D}apter}, or \method{SOLID}.

Extensive experiments indicate that our proposed calibration approach consistently enhances the performance of 7 forecasting models across 8 real-world datasets. Notably, our \method{Reconditionor} reliably identifies cases requiring CDS adaptation with a high accuracy of 89.3\%. Furthermore, our proposed \method{SOLID} yields an average improvement ranging from 8.7\% to 15.1\% when addressing significant CDS situations as detected by \method{Reconditionor}. Even in cases with less pronounced CDS, \method{SOLID} still achieves an average improvement ranging from 0.3\% to 6.3\%. From an efficiency perspective, our method introduces a 20\% additional time overhead, a gap smaller than the effects of dataset variability. Crucially, \method{Reconditionor}'s metric closely aligns with \method{SOLID}'s performance gains. These findings provide robust validation of the effectiveness of our calibration approach.

The main contributions of this work are summarized as follows: 
\begin{itemize}[leftmargin=*]
    \item We propose the concept of context-driven distribution shift (CDS) by studying the driving factors of distribution shifts and investigating two observed contexts (\textit{temporal segments} and \textit{periodic phases}), as well as unobserved contexts.
    \item We propose an end-to-end calibration approach, including \method{Reconditionor}, a detector to measure the severity of the model's susceptibility to CDS, and \method{SOLID}, an adapter to calibrate models for enhancing performance under severe CDS.
    \item Extensive experiments over various datasets demonstrate that \method{Reconditionor} detects CDS of forecasting models on the training dataset accurately, and \method{SOLID} significantly enhances current models without substantially compromising time efficiency.
\end{itemize}

\section{Related Work}

\subsection{Time series forecasting models}

With the successful rise of deep learning approaches, many recent work utilizes deep learning to better explore the non-linearity and multiple patterns of the time series and empirically show better performance. Some research introduces the Transformer to capture temporal dependencies with the attention mechanism. 
Specifically, Informer \cite{Informer} proposes ProbSparse self-attention. Autoformer \cite{Autoformer} introduces Auto-correlation attention based on seasonal-trend decomposition. FEDformer \cite{FEDformer} proposes Fourier frequency enhanced attention. ETSformer \cite{ETSformer} leverages exponential smoothing attention. Crossformer \cite{Crossformer} utilizes a two-stage attention to capture both cross-time and cross-dimension dependency. PatchTST \cite{PatchTST} proposes patching and channel independence techniques for better prediction. Different from the Transformer architecture, DLinear \cite{LTSF-Linear} leverages linear models with decomposition or normalization for TSF tasks. In this paper, our proposed pipeline can be easily applied to these TSF models, regardless of the model architecture.

\subsection{Distribution shift in time series}
\par Distribution shift in time series refers to the phenomenon that statistical properties and data distribution continuously vary over time.
To better detect such distribution shifts, various methods have been proposed. Stephan~\etal~\cite{Distribution_shift_detection_1} ~employs dimension reduction and proposes a two-sample hypothesis testing. Sean \etal ~\cite{Distribution_shift_detection_2} ~proposes an expected conditional distance test statistic and localizes the exact feature where the distribution shift appears. Lipton \etal ~\cite{Distribution_shift_detection_3} ~utilizes hypothesis testing based on their proposed Black Box Shift Estimation, thereby detecting the shift. However, these detection methods ignore the important context, and are mostly not appropriate for time series data. In contrast, our proposed residual-based detector has sufficient ability to detect the influence of underlying context on distribution shift.
\par Meanwhile, addressing distribution shifts in time series forecasting is also crucial.
One approach is to employ normalization techniques to stationarize the data. For example, DAIN \cite{DAIN} employs nonlinear networks to adaptively normalize time series. RevIN \cite{RevIn} proposes a reversible instance normalization to alleviate series shift. AdaRNN \cite{AdaRNN} introduces an Adaptive RNN to solve the problem. Dish-TS \cite{Dish-TS} proposes a Dual-Coefficient Net framework to separately learn the distribution of input and output space, thus capturing their divergence. Other approaches combine statistical methods with deep networks. Syml \cite{distribution_shift_2} applies Exponential smoothing on RNN, to concurrently fit seasonality and smoothing coefficients with RNN weights. SAF \cite{SAF} integrates a self-supervised learning stage on test samples to train the model before making the prediction. However, these methods are coupled to model architecture or require modification during or before the training process, which limits their application. In contrast, our proposed approach can efficiently adapt the given trained models solely during test time and at the sample level.

\section{Preliminaries}

For a multivariate time series with $M$ variables, let $\vx_{t} \in \mathbb{R}^M$ represent a sample at $t$-th timestep. Given a historical sequence: $\mX_{t-L:t} = [\vx_{t-L}, \cdots, \vx_{t-1}] \in \mathbb{R}^{L\times M}$, where $L$ is the look-back window size, the task is to predict future values with $T$ forecasting window size: $\hat{\mX}_{t:t+T} = [\vx_{t}, \cdots, \vx_{t+T-1}] \in \mathbb{R}^{T\times M}$. The training objective of a model $f$ is to find the best mapping from input to output sequence, \ie ~ $\hat{\mX}_{t:t+T} = f(\mX_{t-L:t})$. In this work, we assume $f$ is a deep neural network composed of two parts: a \textit{feature extractor} $g_\Phi: \mathbb{R}^{L\times M} \rightarrow \mathbb{R}^d$ mapping the historical values to a $d$-dimensional latent representation, and a linear top $h_\theta$ named as \textit{prediction layer} mapping the representation to predicted future values. The specific definitions of $g_\Phi$ and $h_\theta$ for different model architectures will be introduced in Appendix \ref{specific_structure_of_different_models}.

\section{context-driven distribution shift} \label{sec:context-limitation}

In this section, we introduce the limitation of traditional TSF models. As mentioned before, the data generation in time series is typically influenced by temporal external factors (\ie, context $c$), such as \textit{temporal segments} and \textit{periodic phases}. Let $X$, $Y$ and $C$ denote the variables of historical data $\mX_{t-L:t}$, future data $\mX_{t:t+T}$ and context $c_t$ at time step $t$, respectively. The generation of $Y$ is dependent on $X$ and $C$, characterized as $P(Y\mid X, C)$.

Due to the ignorance of contexts, the model $f$ trained on the dataset learns a marginal distribution $P(Y\mid X)=\sum_c P(Y\mid X, C=c)P(C=c)$. This 
introduces a confounding bias \cite{pearl2009causality} since context $C$ usually influences both the historical data $X$ and the future data $Y$, which is evident in Appendix \ref{sec:appendix_context_visual}. To illustrate this concept, consider a simplified example in the domain of recommendation. In the winter season (context $C$), users who purchase hot cocoa (historical data $X$) also tend to buy coats (future data $Y$). A model may "memorize" the correlation between $X$ and $Y$ (a spurious correlation), and mistakenly recommend coats to a user who purchases hot cocoa in summer. This confounding bias leads to suboptimal product recommendations. In our subsequent theoretical analysis (\sref{sec:theoretical_analysis}), we detail that such context consistently adds a bias term to the model. The following defines this phenomenon formally.
\begin{definition}[Context-driven Distribution Shift]
    \label{def:CDS}
    If there exists a variable $C$ (related to time $t$) that influences the data generation process from historical $X$ to future $Y$, \ie,
    \begin{align*}
        P(Y\mid X)\neq P(Y\mid X, C),
    \end{align*}
    then this time series data is said to exhibit a \textbf{context-driven distribution shift}, or CDS. The variable $C$ is termed the \textbf{context}.
\end{definition}

\begin{figure}[t]
    \centering
    \includegraphics[width=0.48\textwidth]{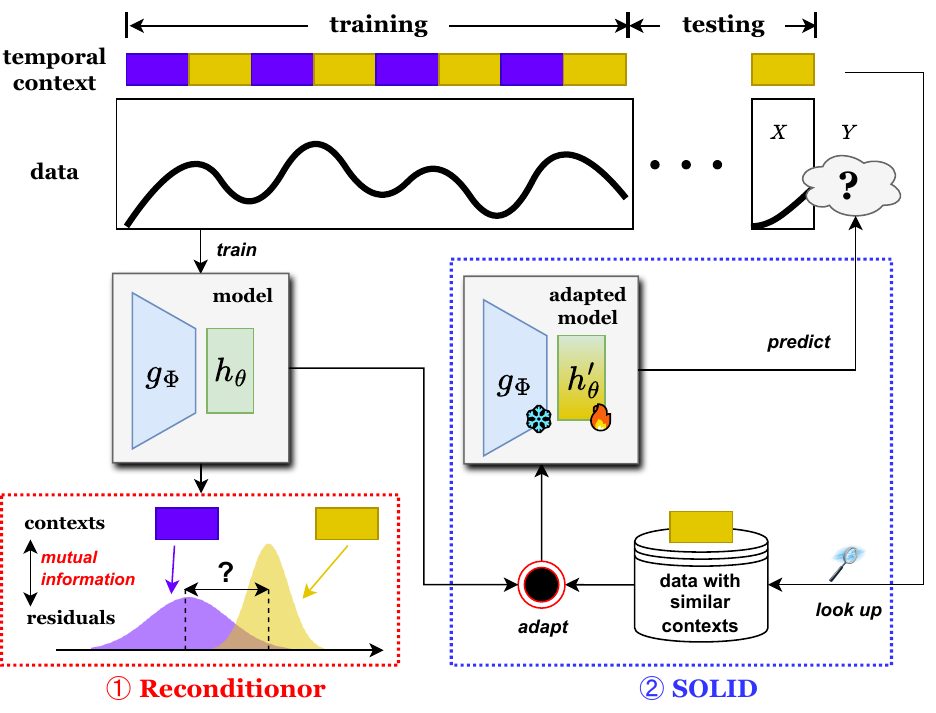}
    \caption{Pipeline of our calibration framework to detect and adapt to context-driven distribution shift (CDS). We leverage \textcircled{\raisebox{-0.9pt}{1}} \textbf{residual-based context-driven distribution shift detector} (\method{Reconditionor}) to assess how susceptible a trained model is to CDS. If we detect a significant susceptibility, we employ \textcircled{\raisebox{-0.9pt}{2}} \textbf{sample-level contextualized adapter} (\method{SOLID}) to adapt the model for each test sample using preceding data that share similar contexts.}
    \label{fig:pipeline}
\end{figure}

\section{Calibration framework for CDS}

In this section, we introduce a general calibration methodology for detecting (in \sref{sec:context-indicator}) and adapting (in \sref{sec:solid}) to CDS in conjunction with the model. The pipeline of this calibration framework is illustrated in Figure \ref{fig:pipeline}.

\subsection{Residual-based CDS detector}\label{sec:context-indicator}

Our primary focus lies in assessing the model's susceptibility to CDS. Our evaluation predominantly centers around observed contexts, as the analysis of unobserved contexts is computationally infeasible. Fortunately, for our empirical investigation, the utilization of observed contexts proves to be sufficient and effective.

As visually demonstrated in Figure \ref{fig:intro-residuals}, the presence of contexts introduces a bias to the model estimation, causing variations in residual distributions across different contexts. Based on it, we propose a novel detector, namely \textbf{\underline{Re}sidual-based \underline{con}text-driven \underline{di}stribu\underline{tion} shift detect\underline{or}} (or \method{Reconditionor}), by measuring the \textit{mutual information} (MI) between prediction residuals and their corresponding contexts. The MI quantifies the extent of information acquired regarding the residuals when observing the context, which serves as a metric for evaluating the influence of contexts on the model. MI can be computed by:
\begin{align}
    \label{eq:indicator-residual}
    \delta = \text{MI}(\Delta Y; C)
    = \mathbb E_{C} \left[
        D_{\text{KL}} \left(P(\Delta Y \mid C)\ \| \ P(\Delta Y)
    \right)\right],
\end{align}
where $\Delta Y=f(X)-Y$ is the residuals of model $f$, and $D_{\text{KL}}(P \| Q)$ is Kullback–Leibler (KL) divergence between distributions $P$ and $Q$.

We reuse Figure \ref{fig:intro-residuals} to illustrate the concept behind \method{Reconditionor} when detecting the distribution shift based on the context of periodic phases. The marginal residual distribution $P(\Delta Y)$ typically exhibits a mean close to zero after training. However, the residual distributions conditioned on different contexts (\eg, the 47th phase and the 32nd phase) $P(\Delta Y\mid C)$ clearly show non-zero mean values. This increases the KL divergence between the two distributions, consequently elevating $\delta$ and indicating a strong CDS. Additionally, a non-zero mean in the conditional residual distribution suggests the model $f$ fails to adequately fit the data within each context, signaling the need for further adaptation. In summary, a high value of $\delta$ for a model $f$ implies the necessity for adapting $f$, which is also empirically verified in \sref{sec:relation_reconditionor_solid}.

In practice, we assume that the residuals follow Gaussian distributions. This assumption is based on the utilization of MSE loss, which implicitly presupposes that the residuals adhere to additive Gaussian noise. This characteristic is also evident in Figure \ref{fig:intro-residuals}. The adoption of this assumption expedites the calculation of KL divergence because Gaussian distributions offer a straightforward analytical solution for it.

We illustrate the full algorithm for \method{Reconditinor} in Algorithm \ref{alg:Reconditionor}. In lines 1-2, we initialize the sets of residuals for both the marginal distribution $P(\Delta Y)$ and the conditional distributions $P(\Delta Y\mid C)$ for each $C\in[K]$. In lines 3-7, we update these sets with the residuals computed by $f$. We compute the mean and standard deviation values for $P(\Delta Y)$ in line 8 and perform a similar computation for $P(\Delta Y\mid C)$ in line 11. Finally, in line 12, we compute and average the KL divergences between $P(\Delta Y)$ and $P(\Delta Y\mid C)$ to obtain the detector score $\delta$.

\begin{algorithm}[t]
\caption{Algorithm for \method{Reconditionor}}
\label{alg:Reconditionor}
\KwIn{Model $f$, training data with $K$ contexts $\mathcal D^{\text{train}}=\{(\mX_{t-L:t}, \mX_{t:t+T}, c_{t}): t<t_{\text{train}}, c_t\in[K]\}$.}
\KwOut{$\delta\in[0, 1]$ indicating $f$'s susceptibility to CDS.}
$R \gets \varnothing$\;
$R_1, \cdots, R_{K} \gets \varnothing, \cdots,  \varnothing$\;
\For{$L \leq t < t_{\text{train}}$}{
    $r\gets f(\mX_{t-L:t}) - \mX_{t:t+T}$\;
    $R\gets R \cup r$\;
    $R_{c_t}\gets R_{c_t} \cup r$\;
}
$\mu, \sigma\gets \text{Mean}(R), \text{Standard-Deviation}(R)$\;
$\delta\gets 0$\;
\For{$c \in [K]$}{
    $\mu_c, \sigma_c\gets \text{Mean}(R_c), \text{Standard-Deviation}(R_c)$\;
    $\delta\gets \delta + \frac{|R_c|}{|R|} \text{KL}(\mathcal N(\mu_c, \sigma_c^2)\ \|\ \mathcal N(\mu, \sigma^2))$\;
}
\Return{$\delta$}\;
\end{algorithm}

\subsection{Sample-level contextualized adapter}\label{sec:solid}

A higher metric $\delta$ from \method{Reconditionor} signifies a stronger impact of CDS on a model. Given the nature of CDS, our primary concept is to adjust the model to align with the conditional distribution $P(Y | X, C)$ instead of the marginal distribution $P(Y | X)$. However, noticing that the context $C$ is consistently changing at each time step, a one-size-fits-all adaptation of the model is inherently unfeasible. Therefore, we propose to carry out adaptations at the individual \textit{sample-level}.

For each test sample $X_{t-L:t}$, it is not viable to adapt the model solely relying on the input $X_{t-L:t}$. Therefore, we commence by implementing data augmentation through the creation of a dataset derived from this specific sample, formulated as:
\begin{align}
\label{eq:contextualized_dataset}
    \mathcal D_{\text{ctx}} = \textsc{Select}(\{(\mX_{t'-L:t'}, \mX_{t':t'+T}): t' + T\leq t\}),
\end{align}
where $\textsc{Select}$ operation involves the selection of preceding samples that share a similar context with the provided sample $\mX_{t-L:t}$, and we will provide further elaboration on this operation in \sref{sec:designed_index}. We denote the resulting dataset as the \textit{contextualized dataset} ($\mathcal D_{\text{ctx}}$).  Specifically, before making prediction to the test sample $X_{t-L:t}$, we employ $\mathcal D_{\text{ctx}}$ to adapt the model to alleviate the influence of CDS. We refer to this step as \textbf{\underline{S}ample-level c\underline{O}ntextua\underline{LI}zed a\underline{D}apter}, or \method{SOLID}. It's worth noting that since adaptation takes place during the testing phase, we propose to modify the prediction layer $h_\theta$ while keeping the feature extractor $g_\Phi$ unchanged for efficiency. In our empirical analysis (\sref{sec:exp_tuning_strategies}), we observed that fine-tuning solely the prediction layer not only improves test phase efficiency but also consistently delivers better performance by mitigating the risk of overfitting.

Additionally, as we outline in the subsequent theoretical analysis (\sref{sec:theoretical_analysis}), the fine-tuning process is fundamentally a \textit{bias-variance trade-off}: fine-tuning reduces the bias caused by CDS, but introduces an additional variance from the noise of contextualized dataset due to the data scarcity, potentially impacting model performance negatively. Thus, optimally tuning the fine-tuning steps to balance the trade-off is essential.

\subsection{Contextualized dataset selection} \label{sec:designed_index}

As we mentioned previously, the core of adaptation involves creating the contextualized dataset $\mathcal D_{\text{ctx}}$ for the sample at $t$. In this section, we introduce the $\textsc{Select}$ operation in \Eqref{eq:contextualized_dataset}.  Note that due to the unavailability of the true context governing the data generation process, it is not feasible to select samples with precisely the same context. To address this issue, we design a comprehensive strategy based on the observable contexts (temporal segments and periodic phases), and employ sample similarity as a proxy for unobserved contexts.

\subsubsection{Temporal segments}
\label{sec:index_1}

The data generation process typically evolves over time \cite{distribution-shift}. Consequently, we claim that the \textbf{temporal segment} is a critical context. Therefore, we focus on samples that are closely aligned with the test samples in the temporal dimension, formally,
\begin{align*}
    \{(\mX_{t'-L:t'}, \mX_{t':t'+T}): t - \lambda_T \leq t'\leq t - T\},
\end{align*}
where $\lambda_T$ controls the time range for selection. When samples are too distant from $t$, we conclude that they are in distinct temporal segments and consequently exclude them from selection.

\subsubsection{Periodic phases}
\label{sec:index_2}

Furthermore, it's worth mentioning that time series data often exhibit periodic characteristics. The data generation process can vary across different phases. Therefore, we claim that the \textbf{periodic phase} constitutes another critical context.

To find the samples with similar phases, we need to detect the underlying periods. Specifically, we follow ETSformer \cite{ETSformer} and TimesNet \cite{TimesNet} to employ the widely-used Fast Fourier Transform (FFT) on the training dataset $\mX\in\mathbb R^{t_{\text{train}}\times M}$ with length-$t_{\text{train}}$ and $M$ variables, formulated as:
\begin{align}
    \label{eq:period-1}
    T^* = \Bigl\lfloor\nicefrac{t_{\text{train}}}{\left\{\mathop{\arg\max}_{
        k\in\left\{2,...,\left[ \nicefrac{t_{\text{train}}}{2} \right]\right\}
    } \sum_{i=1}^M \text{Ampl}\left(
        \text{FFT}\left(
            \mX^{i}
        \right)
    \right)_k\right\}} \Bigr\rfloor.
\end{align}

Here, $\mX^{i}$ is the sequence of the $i$-th variables in $\mX$, $\text{FFT}(\cdot)$ and $\text{Ampl}(\cdot)$ denote the Fast Fourier Transform (FFT) and amplitude respectively. To determine the most dominant frequency, we sum the amplitudes across all $M$ channels and select the highest value, which is converted to the periodic length $T^*$.

Next, we employ the periodic length to select samples with closely aligned phases. Particularly, for the given test sample at time step $t'$, we select samples that display minimal difference in the phases, formulated as
\begin{align*}
    \left\{(\mX_{t'-L:t'}, \mX_{t':t'+T}): 
        \left|
            \frac{t \text{ mod } T^* - t' \text{ mod } T^*}{T^*}
        \right| < \lambda_P
    \right\},
\end{align*}
where $t \text{ mod } T^*$ and $t' \text{ mod } T^*$ are the phases of the test sample and preceding samples, respectively. $\lambda_P$ is a hyperparameter controlling the threshold for the acceptable phase difference. If the difference exceeds a certain threshold, the preceding samples will not be considered to share the same context as the test sample.

\begin{algorithm}[t]
\caption{Algorithm for \method{SOLID}}
\label{alg:solid}
\KwIn{Model $f=(g_\Phi, h_\theta)$, test sample $\mX_{t-L:t}$, preceding data $\{(\mX_{t'-L:t'}, \mX_{t':t'+T}): t'+T\leq t\}$, similarity metric $S(\cdot, \cdot)$, periodic length $T^*$ computed by \Eqref{eq:period-1}, hyperparameters $\lambda_T$, $\lambda_P$, $\lambda_N$ and $lr$.}
\KwOut{Prediction for the test sample: $\hat \mX_{t:t+T}$}
$\mathcal T \gets \varnothing$\;
\For{$t-\lambda_T \leq t' \leq t-T$}{
    $\Delta_P \gets \left|
        \frac{t \text{ mod } T^* - t' \text{ mod } T^*}{T^*}
    \right|$\;
    \If{$\Delta_P < \lambda_P$}{
        $\mathcal T \gets \mathcal T \cup \{t'\}$\;
    }
}
$\mathcal T_{\text{ctx}} \gets \mathop{\text{argTop-}\lambda_N}\limits_{t'\in \mathcal T}(S(\mX_{t'-L:t'}, \mX_{t-L:t}))$\;
$\mathcal D_{\text{ctx}} \gets \{
    (\mX_{t'-L:t'}, \mX_{t':t'+T})\mid t\in \mathcal T_{\text{ctx}}
\}$\;
$h_\theta' \gets$ fine-tune $h_\theta$ using $\mathcal D_{\text{ctx}}$ with a learning rate $lr$\;
$\hat \mX_{t:t+T} \gets h_\theta'(g_\Phi(\mX_{t-L:t}))$\;
\Return{$\hat \mX_{t:t+T}$}\;
\end{algorithm}

\subsubsection{Address unobserved contexts through sample similarity}
\label{sec:index_3}

Even though the strategies introduced in \sref{sec:index_1} and \sref{sec:index_2} efficiently identify potential samples with similar contexts, we cannot guarantee a consistent mapping relationship $X\mapsto Y$ for these samples due to the existence of \textit{unobserved contexts}. To further enhance the quality of selection and address this issue, it's essential to recognize that context typically influences input data through a causal effect $C\mapsto X$, which suggests a correlation between contexts and inputs. 

Inspired by this insight, we assume that when samples have similar inputs $X$, they are more likely to share a similar context $C$. Consequently, we incorporate \textbf{sample similarity} as a proxy of unobserved contexts. The calculation of the similarity can be any measurement of interest, and we employ the Euclidean distance as the chosen metric. Specifically, we select the top-$\lambda_N$ similar samples, where $\lambda_N$ serves as a hyperparameter governing the number of samples to be chosen.

\subsubsection{Full algorithm for \method{SOLID} }

Finally, we combine the above strategies for the $\textsc{Select}$ operation, by first filtering the temporal segments and periodic phases (\sref{sec:index_1} and \sref{sec:index_2}) and then selecting top-$\lambda_N$ samples based on sample similarity (\sref{sec:index_3}).

We illustrate the full algorithm for \method{SOLID} in Algorithm \ref{alg:solid}. We first compute the periodic length on the training dataset (\Eqref{eq:period-1}). During the testing stage, for a given test sample, we perform the following steps: In lines 1-7, we filter the time steps based on the observed contexts, temporal segments (\sref{sec:index_1}), and periodic phases (\sref{sec:index_2}). In lines 8-9, we further select samples based on similarity (\sref{sec:index_3}). In lines 10-11, we fine-tune the prediction layer (\sref{sec:solid}) and use it for making predictions.

\section{Theoretical analysis}\label{sec:theoretical_analysis}

In this section, we provide a formal theoretical analysis to estimate the generalization error, before and after considering context, to illustrate the influence from CDS (\sref{sec:context-limitation}), as well as the bias-variance trade-off during the fine-tuning process (\sref{sec:solid}). Given our fine-tuning targets the prediction layer with the feature extractor remaining frozen, our theoretical analysis centers on the latent representation space $g_\Phi(X)$ rather than the raw data $X$. To start with, we first make the following assumption for the generation process of $Y$:

\begin{assumption}[Contextualized generation process]
    \label{asm:data}
    Assume that the input latent representations $g_\Phi(X)$ on the training set can be divided into $K$ context groups based on $K$ different contexts: $(X_1, \cdots, X_K)$, where $X_i \in \mathbb R^{n_i\times d}$ and $n_i$ is the number of data points in the $i$-th group. For each $i$, there exists a parameter vector $\theta_i\in\mathbb R^{d}$ such that the output $Y_i$ follows:
    \begin{align*}
        Y_i = X_i \theta_i + \epsilon_i,
    \end{align*}
    where $\epsilon_i$ is an independent random noise, which satisfies $\mathbb E[\epsilon_i] = 0$, $\mathbb{VAR}[\epsilon_i] = \sigma^2$. Here we assume that $Y_i$ are scalars for simplicity, although it can be readily extended to a multi-dimensional scenario.
\end{assumption}

This assumption extends the widely used \textit{fixed design setting} \cite[Chapter 3.5]{bach2023learning} to multi-context scenarios, which posits that the data generation parameters, $\theta_i$, differ across various contexts $i$. The prediction layer $h_\theta$ directly trained without considering the contexts can be seen as a \textit{global linear regressor} (GLR), as follows:
\begin{definition}[Global linear regressor]
    \label{def:global-linear-regressor}
    A global linear regressor (GLR) $h_{\hat\theta}$ parameterized by $\hat \theta$ is given by:
    \begin{align*}
        \hat \theta = \arg\min_{\theta}\sum_{i=1}^K{||Y_i - X_i\theta||_2^2}.
    \end{align*}
\end{definition}

Define $\mathcal{R}(\alpha_1, \cdots, \alpha_K) = \mathbb{E} \left[ \sum_{i = 1}^K ||Y_i - X_i \alpha_i||_2^2 \right]$ as the expected risk when using a parameter $\alpha_i$ to predict $X_i$ ($i\in[K]$), and let $\mathcal{R}^*$ denote the minimum value of this risk. The following theorem computes the expected risk when we use the globally shared parameter for prediction, with its proof delegated to Appendix \ref{sec:appendix_proof}.
\begin{theorem}[Expected risk for GLR]
    \label{thm:error-global}
    For $\hat\theta$ in Definition \ref{def:global-linear-regressor}:
    \begin{align*}
        \mathcal{R}(\underbrace{\hat \theta, \cdots, \hat \theta}_{K}) - \mathcal{R}^* = \underbrace{\sum_{i=1}^K\left\|
            \overline\theta-  \theta_i
        \right\|^2_{\psi_i}}_{\text{bias part}} + 
        \underbrace{\sigma^2 d}_{\text{variance part}},
    \end{align*}
    where $\psi_i = X_i^\top X_i$, and $\overline \theta = (\sum_{i=1}^K{\psi_i})^{-1} (\sum_{i=1}^K{\psi_i \theta_i})$. The quantity $||\cdot||_{\psi_i}$ is the Mahalanobis distance norm, defined as $ \|\theta\|^2_{\psi_i} = \theta^\top \psi_i \theta$.
\end{theorem}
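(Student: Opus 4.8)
The plan is to exploit the linearity of the global least-squares solution and then split the excess risk into a bias term (from forcing one shared parameter across heterogeneous contexts) and a variance term (from estimating that parameter on noisy data). First I would write down the normal equations for the GLR of Definition~\ref{def:global-linear-regressor}: setting the gradient of $\sum_i\|Y_i-X_i\theta\|_2^2$ to zero and substituting $Y_i=X_i\theta_i+\epsilon_i$ gives the closed form $\hat\theta = \overline\theta + \Psi^{-1}\sum_{j=1}^K X_j^\top\epsilon_j$, where $\Psi=\sum_i\psi_i$. Because $\E[\epsilon_j]=0$, this immediately shows $\E[\hat\theta]=\overline\theta$, so the quantity $\overline\theta$ in the statement is exactly the mean of the trained predictor; this identification is what makes the later bias-variance split clean. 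In parallel I would compute $\mathcal{R}^*$: expanding $\mathcal{R}(\alpha_1,\dots,\alpha_K)=\sum_i\big(\|X_i(\theta_i-\alpha_i)\|_2^2 + n_i\sigma^2\big)$ shows the risk is minimized at $\alpha_i=\theta_i$, so $\mathcal{R}^*=\sum_i n_i\sigma^2$, the irreducible noise floor.

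Next I would evaluate $\mathcal{R}(\hat\theta,\dots,\hat\theta)=\sum_i\E\big[\|Y_i-X_i\hat\theta\|_2^2\big]$, reading the outer expectation as an independent (fresh-noise) draw of $Y_i$, so the noise appearing in the evaluation is independent of the training noise carried inside $\hat\theta$. Writing $Y_i-X_i\hat\theta = X_i(\theta_i-\hat\theta)+\epsilon_i$ and squaring, the cross term vanishes in expectation by independence, leaving $\sum_i\E[\|\theta_i-\hat\theta\|^2_{\psi_i}] + \sum_i n_i\sigma^2$. The second piece cancels against $\mathcal{R}^*$, so the entire excess risk comes from the first piece. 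I would then decompose $\theta_i-\hat\theta = (\theta_i-\overline\theta)-(\hat\theta-\overline\theta)$; since $\hat\theta-\overline\theta$ is mean-zero and $\theta_i-\overline\theta$ is deterministic, the cross term again drops, yielding the bias contribution $\|\overline\theta-\theta_i\|^2_{\psi_i}$ plus a variance contribution $\E[\|\hat\theta-\overline\theta\|^2_{\psi_i}]$.

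Aggregating the variance contribution over contexts is where the trace identity does the work: $\sum_i\E\big[(\hat\theta-\overline\theta)^\top\psi_i(\hat\theta-\overline\theta)\big] = \E\big[(\hat\theta-\overline\theta)^\top\Psi(\hat\theta-\overline\theta)\big]$, and substituting $\hat\theta-\overline\theta=\Psi^{-1}u$ with $u=\sum_j X_j^\top\epsilon_j$ turns this into $\E[u^\top\Psi^{-1}u]=\Tr(\Psi^{-1}\E[uu^\top])$. Using the within-group covariance $\Cov(\epsilon_j)=\sigma^2 I_{n_j}$ together with cross-group independence gives $\E[uu^\top]=\sigma^2\sum_j X_j^\top X_j=\sigma^2\Psi$, so the trace collapses to $\sigma^2\Tr(I_d)=\sigma^2 d$. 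Combining the bias sum with this variance term reproduces the claimed identity exactly.

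I expect the main obstacle to be conceptual rather than computational: pinning down the correct reading of the expectation in $\mathcal{R}(\hat\theta,\dots,\hat\theta)$. If one instead evaluates on the same noise realization used to fit $\hat\theta$, the cross term no longer vanishes; a hat-matrix computation then shows the variance term comes out as $-\sigma^2 d$ (the in-sample training error) rather than the stated $+\sigma^2 d$, even though the bias term is identical. Making the fresh-noise convention explicit is therefore essential for both cross terms to cancel with the right sign. The only other step demanding care is the variance aggregation, where the cross-group independence of the $\epsilon_j$ is precisely what produces the clean $\sigma^2\Psi$ and hence the dimension-only factor $\sigma^2 d$.
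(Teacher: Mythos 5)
Your proposal is correct and takes essentially the same route as the paper: you derive the closed form $\hat\theta=\overline\theta+\Psi^{-1}\sum_j X_j^\top\epsilon_j$ so that $\mathbb{E}[\hat\theta]=\overline\theta$ yields the bias term, and you collapse the variance to $\sigma^2 d$ by a trace identity, exactly as the paper does after invoking its risk-decomposition lemma (Proposition 3.3 of Bach), whose fixed-design, fresh-noise convention you correctly reprove inline and rightly flag as the step where the sign of $\sigma^2 d$ would otherwise flip. Your only departure is organizational: by aggregating $\sum_i\psi_i=\Psi$ before expanding, you reduce the variance term to $\mathbb{E}[u^\top\Psi^{-1}u]=\Tr(\Psi^{-1}\cdot\sigma^2\Psi)=\sigma^2 d$ in one line, which is a tidier version of the paper's per-context computation with $P_i=(\sum_j\psi_j)^{-1}\psi_i(\sum_j\psi_j)^{-1}$ and triple sums, but mathematically identical.
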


The bias part in Theorem \ref{thm:error-global} indicates that GLR is unbiased only when the data generation parameters $\theta_i$ are identical across all groups. However, if they differ due to the influence of contexts, the regressor is biased regardless of the amount of data, \ie, CDS.

In the next, we explore a straightforward approach to address CDS: discarding the existing biased regressor and training a new individual regressor $\hat \theta_i$ for each context group $X_i$ ($i\in [K]$) to eliminate the bias. We refer to this ensemble of regressors as \textit{contextualized linear regressors} (CLR), as follows:
\begin{definition}[Contextualized linear regressor]
    \label{def:context-linear-regressor}
    A set of contextualized linear regressors (CLR) $h_{\hat\theta_i}$ parameterized by $\hat\theta_i$ ($i\in[K]$) are given by:
    \begin{align*}
        \hat \theta_i = \arg\min_{\theta}||Y_i - X_i\theta||_2^2, \quad\forall i \in \{1,2,...,K\}.
    \end{align*}
\end{definition}

Using the same risk notations, the following theorem computes the expected risk for CLR (the proof is detailed in Appendix \ref{sec:appendix_proof}).
\begin{theorem}[Expected risk for CLR]\label{thm:error-context}
    For $\hat\theta_i$ in Definition \ref{def:context-linear-regressor}:
    \begin{align*}
        \mathcal{R}(\hat \theta_1, \cdots, \hat \theta_K) - \mathcal{R}^* = \underbrace{0}_{\text{bias part}} + \underbrace{K \sigma^2 d}_{\text{variance part}}.
    \end{align*}
\end{theorem}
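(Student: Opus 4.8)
The plan is to exploit that each contextualized regressor $\hat\theta_i$ is fit on its own group and is therefore correctly specified, so that (unlike the GLR) no bias survives and the excess risk is pure estimation variance. First I would pin down $\mathcal{R}^*$. Since the risk $\mathcal{R}(\alpha_1,\dots,\alpha_K)=\mathbb{E}[\sum_{i=1}^K\|Y_i-X_i\alpha_i\|_2^2]$ is taken over a fresh draw of the noise, for deterministic $\alpha_i$ it splits as $\sum_i\big(\|X_i(\theta_i-\alpha_i)\|_2^2+n_i\sigma^2\big)$, because the zero-mean noise decouples from the squared-bias term; this is minimized at $\alpha_i=\theta_i$, giving $\mathcal{R}^*=\sum_{i=1}^K n_i\sigma^2$. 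Crucially, because the risk is a sum over groups and each $\hat\theta_i$ depends only on the data of group $i$, the excess risk decouples additively as $\mathcal{R}(\hat\theta_1,\dots,\hat\theta_K)-\mathcal{R}^*=\sum_{i=1}^K\big(\mathbb{E}\|Y_i'-X_i\hat\theta_i\|_2^2-n_i\sigma^2\big)$ with $Y_i'$ an independent copy.

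Next I would treat each group in isolation. Writing $\hat\theta_i=\psi_i^{-1}X_i^\top Y_i$ and substituting $Y_i=X_i\theta_i+\epsilon_i$ gives $\hat\theta_i-\theta_i=\psi_i^{-1}X_i^\top\epsilon_i$, whence $\mathbb{E}[\hat\theta_i]=\theta_i$: each CLR component is unbiased, which is exactly why the bias part vanishes. Evaluating on the fresh copy and dropping the cross term between $\epsilon_i'$ and $\hat\theta_i$ (zero by independence and $\mathbb{E}[\epsilon_i']=0$), the per-group excess risk reduces to $\mathbb{E}\|X_i(\hat\theta_i-\theta_i)\|_2^2=\mathbb{E}\|P_i\epsilon_i\|_2^2$, where $P_i=X_i\psi_i^{-1}X_i^\top$ is the orthogonal projection onto the column space of $X_i$. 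The standard trace identity $\mathbb{E}[\epsilon_i^\top P_i\epsilon_i]=\sigma^2\Tr(P_i)=\sigma^2 d$, using $\Tr(P_i)=\mathrm{rank}(X_i)=d$, then gives $\sigma^2 d$ per group; summing over the $K$ groups produces the variance part $K\sigma^2 d$ and establishes the claim.

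I expect no deep obstacle, since this is essentially Theorem \ref{thm:error-global} specialized to a single group and then summed. The points that require care are (i) fixing the evaluation convention so that the fresh-noise cross terms genuinely vanish and the irreducible floor $\sum_i n_i\sigma^2$ is correctly isolated in both $\mathcal{R}$ and $\mathcal{R}^*$; (ii) using independence of $\epsilon_i$ across groups to justify the additive, cross-group-coupling-free split of the excess risk; and (iii) the implicit full-column-rank assumption on each $X_i$, without which $\Tr(P_i)=d$ fails. The conceptual heart, and the contrast with Theorem \ref{thm:error-global}, is that the bias term $\sum_i\|\overline\theta-\theta_i\|_{\psi_i}^2$ disappears precisely because each regressor sees only its own context; I would therefore make the unbiasedness step explicit rather than fold it into the algebra.
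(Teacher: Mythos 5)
Your proof is correct and takes essentially the same route as the paper: the paper plugs $\alpha_i=\hat\theta_i$ into its risk-decomposition lemma (Lemma \ref{lma:risk_decomposition}) and then cites Propositions 3.4 and 3.5 of \cite{bach2023learning} for precisely the two per-group facts you derive by hand, namely unbiasedness via $\hat\theta_i-\theta_i=\psi_i^{-1}X_i^\top\epsilon_i$ and the per-group variance $\mathbb{E}[\epsilon_i^\top P_i\epsilon_i]=\sigma^2\Tr(P_i)=\sigma^2 d$, summed over the $K$ groups. Your extra care about the fresh-noise evaluation convention, the cross-group decoupling from independent $\epsilon_i$, and the full-column-rank requirement on each $X_i$ only makes explicit what the paper's fixed-design setting leaves implicit, so there is no gap.
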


Comparing Theorem \ref{thm:error-global} and Theorem \ref{thm:error-context}, we observe that CLR is always \emph{unbiased} since it addresses the CDS. However, it suffers from a \emph{larger variance} compared to GLR. Specifically, the variance of CLR is $K$-times larger than that of GLR, indicating that more detailed contexts result in higher variance. This makes sense because as $K$ increases, the number of data available for training each CLR diminishes, consequently elevating the variance.

Based on the above findings, we argue that it's crucial to combine GLR and CLR to balance the bias and variance. This can be implemented through a standard pre-training / fine-tuning paradigm. In the pre-training stage, contexts are disregarded and a GLR $h_\theta$ is trained on the training dataset. This pre-training stage mirrors the conventional standard training process. In the fine-tuning stage, for a given new sample at $t$, we employ the dataset with the same context as this sample to fine-tune the learned GLR for limited steps. This stage mirrors the usage of the proposed \method{SOLID}, which brings GLR closer to CLR and reduces bias. In cases where the influence of CDS is substantial (\ie, the bias part in Theorem \ref{thm:error-global} is large), it is advisable to increase the learning rate and the number of fine-tuning steps to mitigate bias. In the converse case, the fine-tuning process should be more limited to reduce variance, as CLR has greater variance (Theorem \ref{thm:error-context}). In practice, it is recommended to tune the learning hyperparameters of \method{SOLID} to achieve an optimal trade-off between bias and variance.

\section{Experiments}

\newcommand{\rotate}[1]{\begin{tabular}{@{}c@{}}\multirow{6}{*}{{\rotatebox[origin=c]{90}{#1}}}\end{tabular}}

\begin{table*}[htbp]
  \centering  
  \caption{Performance comparison. "24 / 96": prediction length is 24 (Illness) or 96 (other datasets), applicable similarly to "36 / 192", \textit{etc}. "$\bm \uparrow$": average improvements achieved by \method{\textit{SOLID}} compared to the baseline. "$\bm \delta$": metrics given by \method{\textit{Reconditioner}} in two observed contexts: periodic phases ($\bm{\delta_P}$) and temporal segments ($\bm{\delta_T}$), in the form of $\bm{\log_{10} \delta_P\ \&\ \log_{10} \delta_T}$. \textcolor[rgb]{1,0,0}{RED} highlights a \textit{strong} CDS in periodic phases (\ie, $\bm{\log_{10} \delta_P\geq -3.2}$), while \textcolor[rgb]{ 0,  .439,  .753}{BLUE} highlights a \textit{weak} CDS in periodic phases (\ie, $\bm{\log_{10} \delta_P< -3.2}$).}
    \resizebox{\linewidth}{!}{
    \begin{tabular}{c|c|cccc|cccc|cccc|cccc|cccc}
    \toprule
    \multicolumn{2}{c|}{\textbf{Dataset}} & \multicolumn{4}{c|}{\textbf{Illness}} & \multicolumn{4}{c|}{\textbf{Electricity}} & \multicolumn{4}{c|}{\textbf{Traffic}} & \multicolumn{4}{c|}{\textbf{ETTh1}} & \multicolumn{4}{c}{\textbf{ETTh2}} \\
    \midrule
    \multicolumn{2}{c|}{\textbf{Method}} & \multicolumn{2}{c}{\textbf{/}} & \multicolumn{2}{c|}{\textbf{+\method{SOLID}}} & \multicolumn{2}{c}{\textbf{/}} & \multicolumn{2}{c|}{\textbf{+\method{SOLID}}} & \multicolumn{2}{c}{\textbf{/}} & \multicolumn{2}{c|}{\textbf{+\method{SOLID}}} & \multicolumn{2}{c}{\textbf{/}} & \multicolumn{2}{c|}{\textbf{+\method{SOLID}}} & \multicolumn{2}{c}{\textbf{/}} & \multicolumn{2}{c}{\textbf{+\method{SOLID}}} \\
    \midrule
    \multicolumn{2}{c|}{\textbf{Metric}} & \textbf{MSE} & \textbf{MAE} & \textbf{MSE} & \textbf{MAE} & \textbf{MSE} & \textbf{MAE} & \textbf{MSE} & \textbf{MAE} & \textbf{MSE} & \textbf{MAE} & \textbf{MSE} & \textbf{MAE} & \textbf{MSE} & \textbf{MAE} & \textbf{MSE} & \textbf{MAE} & \textbf{MSE} & \textbf{MAE} & \textbf{MSE} & \textbf{MAE} \\
    \midrule
    \midrule
    \multirow{6}[2]{*}{\textbf{Informer}} & \textbf{24 / 96} & 5.096  & 1.533  & \textbf{2.874 } & \textbf{1.150 } & 0.321  & 0.407  & \textbf{0.245 } & \textbf{0.355 } & 0.731  & 0.406  & \textbf{0.628 } & \textbf{0.393 } & 0.948  & 0.774  & \textbf{0.684 } & \textbf{0.586 } & 2.992  & 1.362  & \textbf{1.659 } & \textbf{0.988 } \\
          & \textbf{36 / 192} & 5.078  & 1.535  & \textbf{3.299 } & \textbf{1.243 } & 0.351  & 0.434  & \textbf{0.256 } & \textbf{0.363 } & 0.739  & 0.414  & \textbf{0.653 } & \textbf{0.413 } & 1.009  & 0.786  & \textbf{0.759 } & \textbf{0.624 } & 6.256  & 2.091  & \textbf{3.564 } & \textbf{1.553 } \\
          & \textbf{48 / 336} & 5.144  & 1.567  & \textbf{2.879 } & \textbf{1.169 } & 0.349  & 0.432  & \textbf{0.279 } & \textbf{0.381 } & 0.850  & 0.476  & \textbf{0.758 } & \textbf{0.466 } & 1.035  & 0.783  & \textbf{0.776 } & \textbf{0.640 } & 5.265  & 1.954  & \textbf{1.845 } & \textbf{1.079 } \\
          & \textbf{60 / 720} & 5.243  & 1.582  & \textbf{3.773 } & \textbf{1.394 } & 0.385  & 0.493  & \textbf{0.327 } & \textbf{0.416 } & 0.945  & 0.530  & \textbf{0.861 } & \textbf{0.524 } & 1.153  & 0.845  & \textbf{0.948 } & \textbf{0.726 } & 4.038  & 1.673  & \textbf{2.020 } & \textbf{1.104 } \\
          & \textbf{$\uparrow$} & \textcolor[rgb]{ 1,  0,  0}{} & \textcolor[rgb]{ 1,  0,  0}{} & \textbf{37.62\%} & \textbf{20.29\%} & \textcolor[rgb]{ 1,  0,  0}{} & \textcolor[rgb]{ 1,  0,  0}{} & \textbf{21.28\%} & \textbf{14.21\%} & \textcolor[rgb]{ 1,  0,  0}{} & \textcolor[rgb]{ 1,  0,  0}{} & \textbf{11.19\%} & \textbf{1.65\%} & \textcolor[rgb]{ 1,  0,  0}{} & \textcolor[rgb]{ 1,  0,  0}{} & \textbf{23.60\%} & \textbf{19.18\%} & \textcolor[rgb]{ 1,  0,  0}{} & \textcolor[rgb]{ 1,  0,  0}{} & \textbf{51.01\%} & \textbf{33.29\%} \\
          & \textbf{$\bm \delta$} & \multicolumn{4}{c|}{\textcolor[rgb]{ 1,  0,  0}{\textbf{-1.096 \& -1.148}}} & \multicolumn{4}{c|}{\textcolor[rgb]{ 1,  0,  0}{\textbf{-2.975 \& -2.593}}} & \multicolumn{4}{c|}{\textcolor[rgb]{ 1,  0,  0}{\textbf{-2.762 \& -2.238}}} & \multicolumn{4}{c|}{\textcolor[rgb]{ 0,  .439,  .753}{\textbf{-3.83 \& -1.883}}} & \multicolumn{4}{c}{\textcolor[rgb]{ 1,  0,  0}{\textbf{-3.021 \& -1.513}}} \\
    \midrule
    \multirow{6}[2]{*}{\textbf{Autoformer}} & \textbf{24 / 96} & 3.314  & 1.245  & \textbf{2.737 } & \textbf{1.118 } & 0.207  & 0.324  & \textbf{0.189 } & \textbf{0.304 } & 0.621  & 0.391  & \textbf{0.565 } & \textbf{0.376 } & 0.440  & 0.444  & \textbf{0.430 } & \textbf{0.442 } & 0.363  & 0.405  & \textbf{0.362 } & \textbf{0.404 } \\
          & \textbf{36 / 192} & 2.733  & 1.078  & \textbf{2.540 } & \textbf{1.015 } & 0.221  & 0.334  & \textbf{0.205 } & \textbf{0.316 } & 0.666  & 0.415  & \textbf{0.599 } & \textbf{0.379 } & 0.487  & 0.472  & \textbf{0.480 } & \textbf{0.470 } & 0.450  & 0.447  & \textbf{0.449 } & \textbf{0.446 } \\
          & \textbf{48 / 336} & 2.651  & 1.075  & \textbf{2.455 } & \textbf{1.042 } & 0.244  & 0.350  & \textbf{0.232 } & \textbf{0.337 } & 0.649  & 0.405  & \textbf{0.595 } & \textbf{0.390 } & 0.471  & 0.475  & \textbf{0.467 } & \textbf{0.473 } & 0.470  & 0.474  & \textbf{0.468 } & \textbf{0.471 } \\
          & \textbf{60 / 720} & 2.848  & 1.126  & \textbf{2.689 } & \textbf{1.082 } & 0.285  & 0.381  & \textbf{0.273 } & \textbf{0.370 } & 0.684  & 0.422  & \textbf{0.635 } & \textbf{0.411 } & 0.543  & 0.528  & \textbf{0.542 } & \textbf{0.527 } & 0.484  & 0.491  & \textbf{0.483 } & \textbf{0.489 } \\
          & \textbf{$\uparrow$} & \textcolor[rgb]{ 1,  0,  0}{} & \textcolor[rgb]{ 1,  0,  0}{} & \textbf{9.75\%} & \textbf{5.90\%} & \textcolor[rgb]{ 1,  0,  0}{} & \textcolor[rgb]{ 1,  0,  0}{} & \textbf{6.16\%} & \textbf{4.44\%} & \textcolor[rgb]{ 1,  0,  0}{} & \textcolor[rgb]{ 1,  0,  0}{} & \textbf{8.61\%} & \textbf{4.80\%} & \textcolor[rgb]{ 1,  0,  0}{} & \textcolor[rgb]{ 1,  0,  0}{} & \textbf{1.08\%} & \textbf{0.35\%} & \textcolor[rgb]{ 1,  0,  0}{} & \textcolor[rgb]{ 1,  0,  0}{} & \textbf{0.25\%} & \textbf{0.35\%} \\
          & \textbf{$\bm \delta$} & \multicolumn{4}{c|}{\textcolor[rgb]{ 1,  0,  0}{\textbf{-1.166 \& -1.016}}} & \multicolumn{4}{c|}{\textcolor[rgb]{ 1,  0,  0}{\textbf{-2.408 \& -2.134}}} & \multicolumn{4}{c|}{\textcolor[rgb]{ 1,  0,  0}{\textbf{-2.507 \& -2.304}}} & \multicolumn{4}{c|}{\textcolor[rgb]{ 0,  .439,  .753}{\textbf{-3.572 \& -2.321}}} & \multicolumn{4}{c}{\textcolor[rgb]{ 0,  .439,  .753}{\textbf{-3.967 \& -1.921}}} \\
    \midrule
    \multirow{6}[2]{*}{\textbf{FEDformer}} & \textbf{24 / 96} & 3.241  & 1.252  & \textbf{2.707 } & \textbf{1.123 } & 0.188  & 0.304  & \textbf{0.172 } & \textbf{0.284 } & 0.574  & 0.356  & \textbf{0.513 } & \textbf{0.344 } & 0.375  & 0.414  & \textbf{0.370 } & \textbf{0.410 } & 0.341  & 0.385  & \textbf{0.339 } & \textbf{0.383 } \\
          & \textbf{36 / 192} & 2.576  & 1.048  & \textbf{2.365 } & \textbf{0.990 } & 0.197  & 0.311  & \textbf{0.180 } & \textbf{0.290 } & 0.612  & 0.379  & \textbf{0.549 } & \textbf{0.361 } & 0.427  & 0.448  & \textbf{0.420 } & \textbf{0.443 } & 0.433  & 0.441  & \textbf{0.432 } & \textbf{0.440 } \\
          & \textbf{48 / 336} & 2.546  & 1.058  & \textbf{2.435 } & \textbf{1.023 } & 0.213  & 0.328  & \textbf{0.195 } & \textbf{0.307 } & 0.618  & 0.379  & \textbf{0.557 } & \textbf{0.363 } & 0.458  & 0.465  & \textbf{0.454 } & \textbf{0.462 } & 0.503  & 0.494  & \textbf{0.501 } & \textbf{0.491 } \\
          & \textbf{60 / 720} & 2.784  & 1.136  & \textbf{2.677 } & \textbf{1.118 } & 0.243  & 0.352  & \textbf{0.228 } & \textbf{0.336 } & 0.629  & 0.382  & \textbf{0.577 } & \textbf{0.371 } & 0.482  & 0.495  & \textbf{0.478 } & \textbf{0.492 } & 0.479  & 0.485  & \textbf{0.478 } & \textbf{0.484 } \\
          & \textbf{$\uparrow$} & \textcolor[rgb]{ 1,  0,  0}{} & \textcolor[rgb]{ 1,  0,  0}{} & \textbf{8.64\%} & \textbf{5.34\%} & \textcolor[rgb]{ 1,  0,  0}{} & \textcolor[rgb]{ 1,  0,  0}{} & \textbf{7.88\%} & \textbf{5.95\%} & \textcolor[rgb]{ 1,  0,  0}{} & \textcolor[rgb]{ 1,  0,  0}{} & \textbf{9.77\%} & \textbf{3.70\%} & \textcolor[rgb]{ 1,  0,  0}{} & \textcolor[rgb]{ 1,  0,  0}{} & \textbf{1.10\%} & \textbf{0.74\%} & \textcolor[rgb]{ 1,  0,  0}{} & \textcolor[rgb]{ 1,  0,  0}{} & \textbf{0.42\%} & \textbf{0.40\%} \\
          & \textbf{$\bm \delta$} & \multicolumn{4}{c|}{\textcolor[rgb]{ 1,  0,  0}{\textbf{-1.099 \& -0.917}}} & \multicolumn{4}{c|}{\textcolor[rgb]{ 1,  0,  0}{\textbf{-2.967 \& -2.545}}} & \multicolumn{4}{c|}{\textcolor[rgb]{ 1,  0,  0}{\textbf{-2.254 \& -2.265}}} & \multicolumn{4}{c|}{\textcolor[rgb]{ 0,  .439,  .753}{\textbf{-3.52 \& -2.463}}} & \multicolumn{4}{c}{\textcolor[rgb]{ 0,  .439,  .753}{\textbf{-3.733 \& -1.943}}} \\
    \midrule
    \multirow{6}[2]{*}{\textbf{ETSformer}} & \textbf{24 / 96} & 2.397  & 0.993  & \textbf{2.262 } & \textbf{0.955 } & 0.187  & 0.304  & \textbf{0.171 } & \textbf{0.285 } & 0.599  & 0.386  & \textbf{0.487 } & \textbf{0.354 } & 0.495  & 0.480  & \textbf{0.491 } & \textbf{0.478 } & 0.346  & 0.401  & \textbf{0.344 } & \textbf{0.399 } \\
          & \textbf{36 / 192} & 2.504  & 0.970  & \textbf{2.301 } & \textbf{0.934 } & 0.198  & 0.313  & \textbf{0.184 } & \textbf{0.298 } & 0.611  & 0.391  & \textbf{0.492 } & \textbf{0.354 } & 0.543  & 0.505  & \textbf{0.538 } & \textbf{0.503 } & 0.437  & 0.447  & \textbf{0.430 } & \textbf{0.443 } \\
          & \textbf{48 / 336} & 2.488  & 0.999  & \textbf{2.320 } & \textbf{0.961 } & 0.210  & 0.326  & \textbf{0.197 } & \textbf{0.312 } & 0.619  & 0.393  & \textbf{0.501 } & \textbf{0.359 } & 0.581  & 0.521  & \textbf{0.574 } & \textbf{0.518 } & 0.478  & 0.479  & \textbf{0.467 } & \textbf{0.472 } \\
          & \textbf{60 / 720} & 2.494  & 1.011  & \textbf{2.358 } & \textbf{1.002 } & 0.249  & 0.356  & \textbf{0.234 } & \textbf{0.341 } & 0.629  & 0.391  & \textbf{0.529 } & \textbf{0.374 } & 0.569  & 0.534  & \textbf{0.562 } & \textbf{0.530 } & 0.488  & 0.492  & \textbf{0.474 } & \textbf{0.482 } \\
          & \textbf{$\uparrow$} & \textcolor[rgb]{ 1,  0,  0}{} & \textcolor[rgb]{ 1,  0,  0}{} & \textbf{6.50\%} & \textbf{3.03\%} & \textcolor[rgb]{ 1,  0,  0}{} & \textcolor[rgb]{ 1,  0,  0}{} & \textbf{6.89\%} & \textbf{4.75\%} & \textcolor[rgb]{ 1,  0,  0}{} & \textcolor[rgb]{ 1,  0,  0}{} & \textbf{18.28\%} & \textbf{7.57\%} & \textcolor[rgb]{ 1,  0,  0}{} & \textcolor[rgb]{ 1,  0,  0}{} & \textbf{1.08\%} & \textbf{0.56\%} & \textcolor[rgb]{ 1,  0,  0}{} & \textcolor[rgb]{ 1,  0,  0}{} & \textbf{1.93\%} & \textbf{1.31\%} \\
          & \textbf{$\bm \delta$} & \multicolumn{4}{c|}{\textcolor[rgb]{ 1,  0,  0}{\textbf{-1.544 \& -1.001}}} & \multicolumn{4}{c|}{\textcolor[rgb]{ 1,  0,  0}{\textbf{-2.559 \& -2.724}}} & \multicolumn{4}{c|}{\textcolor[rgb]{ 1,  0,  0}{\textbf{-2.488 \& -2.201}}} & \multicolumn{4}{c|}{\textcolor[rgb]{ 0,  .439,  .753}{\textbf{-3.207 \& -3.019}}} & \multicolumn{4}{c}{\textcolor[rgb]{ 1,  0,  0}{\textbf{-3.067 \& -1.079}}} \\
    \midrule
    \multirow{6}[2]{*}{\textbf{Crossformer}} & \textbf{24 / 96} & 3.329  & 1.275  & \textbf{2.353 } & \textbf{0.986 } & 0.184  & 0.297  & \textbf{0.182 } & \textbf{0.295 } & 0.521  & 0.297  & \textbf{0.473 } & \textbf{0.277 } & 0.411  & 0.432  & \textbf{0.382 } & \textbf{0.415 } & 0.641  & 0.555  & \textbf{0.527 } & \textbf{0.544 } \\
          & \textbf{36 / 192} & 3.392  & 1.185  & \textbf{2.527 } & \textbf{1.042 } & 0.219  & 0.317  & \textbf{0.216 } & \textbf{0.314 } & 0.523  & 0.298  & \textbf{0.475 } & \textbf{0.280 } & 0.419  & 0.444  & \textbf{0.396 } & \textbf{0.422 } & 1.262  & 0.814  & \textbf{0.834 } & \textbf{0.725 } \\
          & \textbf{48 / 336} & 3.481  & 1.228  & \textbf{2.499 } & \textbf{1.063 } & 0.238  & 0.348  & \textbf{0.235 } & \textbf{0.343 } & 0.530  & 0.300  & \textbf{0.481 } & \textbf{0.286 } & 0.439  & 0.459  & \textbf{0.418 } & \textbf{0.443 } & 1.486  & 0.896  & \textbf{1.048 } & \textbf{0.835 } \\
          & \textbf{60 / 720} & 3.571  & 1.234  & \textbf{3.103 } & \textbf{1.199 } & 0.274  & 0.373  & \textbf{0.269 } & \textbf{0.368 } & 0.573  & 0.313  & \textbf{0.498 } & \textbf{0.298 } & 0.504  & 0.514  & \textbf{0.473 } & \textbf{0.503 } & 1.220  & 0.848  & \textbf{0.906 } & \textbf{0.758 } \\
          & \textbf{$\uparrow$} & \textcolor[rgb]{ 1,  0,  0}{} & \textcolor[rgb]{ 1,  0,  0}{} & \textbf{23.89\%} & \textbf{12.84\%} & \textcolor[rgb]{ 1,  0,  0}{} & \textcolor[rgb]{ 1,  0,  0}{} & \textbf{1.42\%} & \textbf{1.11\%} & \textcolor[rgb]{ 1,  0,  0}{} & \textcolor[rgb]{ 1,  0,  0}{} & \textbf{10.25\%} & \textbf{5.55\%} & \textcolor[rgb]{ 1,  0,  0}{} & \textcolor[rgb]{ 1,  0,  0}{} & \textbf{5.94\%} & \textbf{3.53\%} & \textcolor[rgb]{ 1,  0,  0}{} & \textcolor[rgb]{ 1,  0,  0}{} & \textbf{28.05\%} & \textbf{8.06\%} \\
          & \textbf{$\bm \delta$} & \multicolumn{4}{c|}{\textcolor[rgb]{ 1,  0,  0}{\textbf{-1.038 \& -0.917}}} & \multicolumn{4}{c|}{\textcolor[rgb]{ 1,  0,  0}{\textbf{-2.333 \& -2.42}}} & \multicolumn{4}{c|}{\textcolor[rgb]{ 1,  0,  0}{\textbf{-2.879 \& -2.171}}} & \multicolumn{4}{c|}{\textcolor[rgb]{ 1,  0,  0}{\textbf{-3.111 \& -2.767}}} & \multicolumn{4}{c}{\textcolor[rgb]{ 1,  0,  0}{\textbf{-2.451 \& -1.149}}} \\
    \midrule
    \multirow{6}[2]{*}{\textbf{DLinear}} & \textbf{24 / 96} & 1.947  & 0.985  & \textbf{1.843 } & \textbf{0.938 } & 0.142  & 0.238  & \textbf{0.140 } & \textbf{0.237 } & 0.412  & 0.283  & \textbf{0.404 } & \textbf{0.277 } & 0.375  & 0.397  & \textbf{0.368 } & \textbf{0.391 } & 0.284  & 0.349  & \textbf{0.280 } & \textbf{0.346 } \\
          & \textbf{36 / 192} & 2.182  & 1.036  & \textbf{1.692 } & \textbf{0.898 } & 0.153  & 0.250  & \textbf{0.152 } & \textbf{0.249 } & 0.423  & 0.287  & \textbf{0.420 } & \textbf{0.285 } & 0.418  & 0.429  & \textbf{0.405 } & \textbf{0.416 } & 0.389  & 0.422  & \textbf{0.360 } & \textbf{0.398 } \\
          & \textbf{48 / 336} & 2.256  & 1.060  & \textbf{1.694 } & \textbf{0.916 } & 0.169  & 0.268  & \textbf{0.168 } & \textbf{0.266 } & 0.436  & 0.295  & \textbf{0.432 } & \textbf{0.291 } & 0.451  & 0.452  & \textbf{0.436 } & \textbf{0.435 } & 0.422  & 0.447  & \textbf{0.400 } & \textbf{0.430 } \\
          & \textbf{60 / 720} & 2.381  & 1.102  & \textbf{2.139 } & \textbf{1.012 } & 0.204  & 0.301  & \textbf{0.203 } & \textbf{0.300 } & 0.466  & 0.315  & \textbf{0.459 } & \textbf{0.310 } & 0.624  & 0.593  & \textbf{0.553 } & \textbf{0.547 } & 0.698  & 0.594  & \textbf{0.415 } & \textbf{0.451 } \\
          & \textbf{$\uparrow$} &       &       & \textbf{15.95\%} & \textbf{10.02\%} &       &       & \textbf{0.75\%} & \textbf{0.47\%} &       &       & \textbf{1.27\%} & \textbf{1.44\%} &       &       & \textbf{5.67\%} & \textbf{4.38\%} &       &       & \textbf{18.85\%} & \textbf{10.32\%} \\
          & \textbf{$\bm \delta$} & \multicolumn{4}{c|}{\textcolor[rgb]{ 1,  0,  0}{\textbf{-1.821 \& -1.301}}} & \multicolumn{4}{c|}{\textcolor[rgb]{ 0,  .439,  .753}{\textbf{-3.303 \& -2.645}}} & \multicolumn{4}{c|}{\textcolor[rgb]{ 1,  0,  0}{\textbf{-2.883 \& -2.589}}} & \multicolumn{4}{c|}{\textcolor[rgb]{ 1,  0,  0}{\textbf{-2.981 \& -2.321}}} & \multicolumn{4}{c}{\textcolor[rgb]{ 1,  0,  0}{\textbf{-3.023 \& -1.891}}} \\
    \midrule
    \multirow{6}[1]{*}{\textbf{PatchTST}} & \textbf{24 / 96} & 1.301  & 0.734  & \textbf{1.253 } & \textbf{0.710 } & 0.134  & 0.227  & \textbf{0.132 } & \textbf{0.226 } & 0.385  & 0.263  & \textbf{0.383 } & \textbf{0.262 } & 0.375  & 0.400  & \textbf{0.368 } & \textbf{0.393 } & 0.274  & 0.336  & \textbf{0.273 } & \textbf{0.336 } \\
          & \textbf{36 / 192} & 1.483  & 0.841  & \textbf{1.449 } & \textbf{0.823 } & 0.151  & 0.243  & \textbf{0.150 } & \textbf{0.242 } & 0.393  & 0.265  & \textbf{0.392 } & \textbf{0.264 } & 0.408  & 0.411  & \textbf{0.402 } & \textbf{0.407 } & 0.340  & 0.380  & \textbf{0.339 } & \textbf{0.379 } \\
          & \textbf{48 / 336} & 1.652  & 0.845  & \textbf{1.624 } & \textbf{0.831 } & 0.168  & 0.262  & \textbf{0.167 } & \textbf{0.262 } & 0.403  & 0.273  & \textbf{0.402 } & \textbf{0.271 } & 0.431  & 0.430  & \textbf{0.428 } & \textbf{0.428 } & 0.332  & 0.383  & \textbf{0.331 } & \textbf{0.382 } \\
          & \textbf{60 / 720} & 1.731  & 0.886  & \textbf{1.661 } & \textbf{0.843 } & 0.201  & 0.292  & \textbf{0.200 } & \textbf{0.291 } & 0.439  & 0.295  & \textbf{0.438 } & \textbf{0.295 } & 0.442  & 0.452  & \textbf{0.433 } & \textbf{0.447 } & 0.378  & 0.420  & \textbf{0.377 } & \textbf{0.418 } \\
          & \textbf{$\uparrow$} &       &       & \textbf{2.92\%} & \textbf{2.99\%} &       &       & \textbf{0.76\%} & \textbf{0.29\%} &       &       & \textbf{0.31\%} & \textbf{0.36\%} &       &       & \textbf{1.51\%} & \textbf{1.06\%} &       &       & \textbf{0.30\%} & \textbf{0.26\%} \\
          & \textbf{$\bm \delta$} & \multicolumn{4}{c|}{\textcolor[rgb]{ 1,  0,  0}{\textbf{-1.172 \& -1.054}}} & \multicolumn{4}{c|}{\textcolor[rgb]{ 0,  .439,  .753}{\textbf{-3.834 \& -2.878}}} & \multicolumn{4}{c|}{\textcolor[rgb]{ 0,  .439,  .753}{\textbf{-3.677 \& -2.901}}} & \multicolumn{4}{c|}{\textcolor[rgb]{ 1,  0,  0}{\textbf{-3.087 \& -2.529}}} & \multicolumn{4}{c}{\textcolor[rgb]{ 0,  .439,  .753}{\textbf{-3.299 \& -1.851}}} \\
    \midrule
    \bottomrule
    \end{tabular}%
    }
        \begin{tablenotes}
      \small
      \item Results of \textit{ETTm1}, \textit{ETTm2} and \textit{Weather} datasets are included in Table \ref{tbl:Main_Results_Part2} of Appendix \ref{sec:appendix_full_result}, due to space limit.
    \end{tablenotes}
  \label{tbl:Main_Results_Part1}
\end{table*}%

In this section, we describe the experimental settings and provide extensive results and analysis. \footnote{Code is available at \url{https://github.com/HALF111/calibration_CDS}.}

\subsection{Experiment settings}

\textbf{Datasets.} We conduct the experiments on 8 popular datasets for TSF: Electricity, Traffic, Illness, Weather \cite{LSTNet}, and 4 ETT datasets (ETTh1, ETTh2, ETTm1, ETTm2) \cite{Informer}. We follow the standard preprocessing protocol \cite{Informer, Autoformer} and partition the datasets into train/validation/test sets by the ratio of 6:2:2 for ETT and 7:1:2 for the other datasets. \ifkdd{}\else{Appendix \ref{dataset_detailed} contains more dataset details.}\fi

\par \textbf{Baseline models.} As aforementioned, our proposed approach is a general framework that can calibrate many deep forecasting models. To verify the effectiveness, we utilize several forecasting models for the detection and adaptation, including Informer \cite{Informer}, Autoformer \cite{Autoformer}, FEDformer \cite{FEDformer}, ETSformer \cite{ETSformer}, Crossformer \cite{Crossformer}, DLinear \cite{LTSF-Linear} and PatchTST \cite{PatchTST}. \ifkdd{}\else{Appendix \ref{baseline_models} contains more details about the baselines.}\fi

\par \textbf{Experimental details.} For a fair comparison, we set the prediction length $T$ of \{24,36,48,60\} for the Illness dataset, and \{96,192,336,720\} for the others, which aligns with the common setting for TSF tasks \cite{Autoformer, FEDformer}. Additionally, for other hyper-parameters, we follow the primary settings proposed in each respective paper \cite{Crossformer, ETSformer, FEDformer, Autoformer, Informer}. For our \method{SOLID}, we employ gradient descent to fine-tune the prediction layer on $\mathcal D_{\text{ctx}}$ for one epoch. Appendix \ref{hyper-parameters} contains more details about the hyper-parameters.
\par \textbf{Evaluation metrics.} Consistent with previous research \cite{Informer, Autoformer, FEDformer}, we compare the performance via two widely-used metrics: Mean Squared Error (MSE) and Mean Absolute Error (MAE). Smaller MSE and MAE indicate better forecasting performance. We also compute the proposed \method{Reconditionor} as a metric for detecting the extent of models' susceptibility to CDS based on the two observed contexts, periodic phase ($\delta_P$) and temporal segment ($\delta_T$). For $\delta_P$, the number of contexts equals to the periodic length, detailed in \Eqref{eq:period-1}. For $\delta_T$, we partition the training set into five equal segments, with each one representing a distinct temporal segment. Larger $\delta_P$ and $\delta_T$ indicate a stronger CDS for the given model and dataset.
\subsection{Main results analysis}\label{sec:main_result}

Table \ref{tbl:Main_Results_Part1} shows the main results of our proposed \method{SOLID} alongside the scores of the proposed \method{Reconditionor}. We conduct a thorough analysis of the data from both perspectives.

\subsubsection{Effectiveness of \method{SOLID}} 
\begin{enumerate}[leftmargin = 1em, itemindent = 0em]
    \item[$\bullet$] \method{SOLID} enhances the performance of all models across various datasets by effectively addressing the CDS issue. Particularly, \method{Informer}, which is relatively less powerful, experiences a significant enhancement of 10\%-60\% in MSE. One explanation is that CDS has a more detrimental effect on weaker models, as they struggle to learn diverse patterns under different contexts.
    \item[$\bullet$] For datasets and models showing significant CDS under the context of periodic phases (\ie, $\log_{10}\delta_P\geq -3.2$), \method{SOLID} achieves a considerable improvement. Specifically, it yields an average improvement of 15.1\% for Illness, 9.9\% for Traffic, and 8.7\% for Electricity in terms of MSE on these cases. This highlights the effectiveness of \method{SOLID} in addressing severe CDS issues.
    \item[$\bullet$] On the cases of weak CDS under the context of periodic phases (\ie, $\log_{10}\delta_P$ below the $-3.2$ threshold), \method{SOLID} still achieves a consistent improvement ranging from 0.3\% to 6.7\% across different datasets. This observation underscores the widespread presence of CDS in time series.
\end{enumerate}

\subsubsection{Effectiveness of \method{Reconditionor}}

\begin{enumerate}[leftmargin = 1em, itemindent = 0em]
    \item[$\bullet$] Our \method{Reconditionor} effectively assesses the magnitude of CDS and aligns with the enhancement achieved by \method{SOLID}. When employing the $-3.2$ threshold for $\log_{10} \delta_P$ to ascertain whether MAE improvement surpasses 1\%, the classification accuracy reaches 89.3\% (50 out of 56). This demonstrates that this detector is universally applicable and unaffected by different datasets and models.
    \item[$\bullet$] One exception is that \method{Informer} consistently achieves an improvement above  1\%, irrespective of \method{Reconditionor} metrics. As aforementioned, one explanation is that Informer is relatively weaker, making it more amenable to improvement by \method{SOLID}. When the results of \method{Informer} are excluded, the classification accuracy for \method{Reconditionor} rises to 93.8\% (45 out of 48).
    \item[$\bullet$] In addition, $\delta_P$ explains the performance improvement better than $\delta_T$. We will give a possible conjecture in the following subsection to elucidate this observation.
\end{enumerate}

\subsubsection{Correlation of \method{Reconditionor} and \method{SOLID}}\label{sec:relation_reconditionor_solid}

\begin{figure}[t]
    \centering
    \includegraphics[width=0.47\textwidth,trim={10 10 10 10},clip]{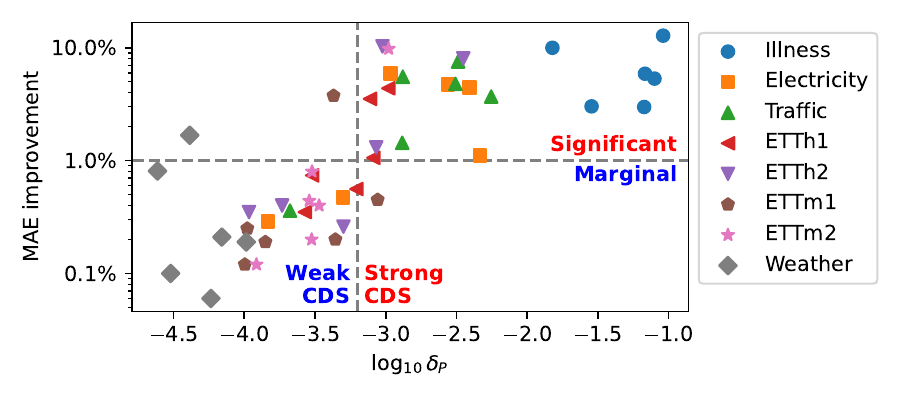}
    \caption{The relationship of \method{Reconditionor} metric $\bm{\log_{10} \delta_P}$ (X-axis) and MAE improvements achieved by \method{SOLID} (Y-axis) for 8 datasets and 6 models. }
    \label{fig:indicator-periodic}
\end{figure}

To further investigate the correlation between \method{Reconditionor} and the improvements achieved by \method{SOLID}, we plot $\delta_P$ and MAE improvements in Figure \ref{fig:indicator-periodic} based on the results of Table \ref{tbl:Main_Results_Part1} and Table \ref{tbl:Main_Results_Part2} in Appendix \ref{sec:appendix_full_result}. To depict the trend more effectively, we have excluded the data of \method{Informer} as previously explained.
Notably, we have observed a pronounced and consistent upward trend between $\delta_P$ and MAE improvement in Figure \ref{fig:indicator-periodic}. This trend is highly evident, with a Spearman correlation coefficient of 0.7998. This finding indicates that $\delta_P$ can serve as a valuable metric for estimating and explaining performance improvements. We also elaborate on the relationship between $\delta_T$ and MAE improvement in detail in Appendix \ref{sec:app_corr_t}.

\begin{figure*}[t]
    \centering
    \subfigure[Ablation (illness)]{
        \label{fig:extra-ablation-illness}
        \includegraphics[width=0.158\textwidth,trim={10 10 10 10},clip]{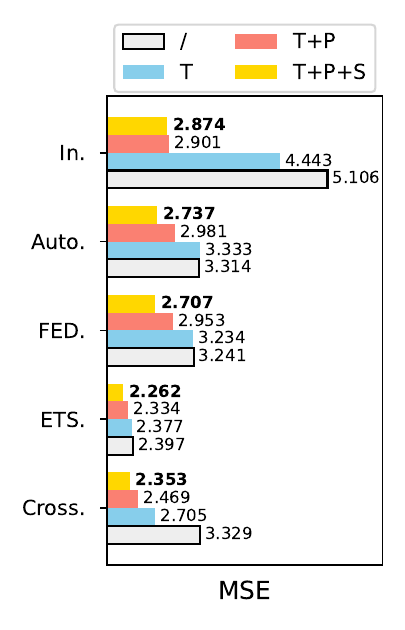}
    }%
    \subfigure[Ablation (traffic)]{
        \label{fig:extra-ablation-traffic}
        \includegraphics[width=0.158\textwidth,trim={10 10 10 10},clip]{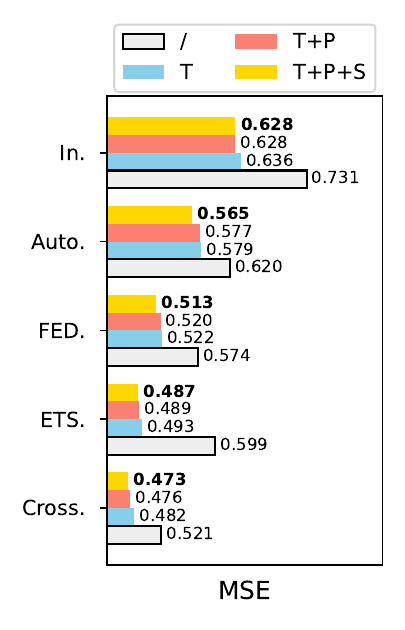}
    }%
    \subfigure[Tuning strategies]{
        \label{fig:extra-tuning}
        \includegraphics[width=0.158\textwidth,trim={10 10 10 10},clip]{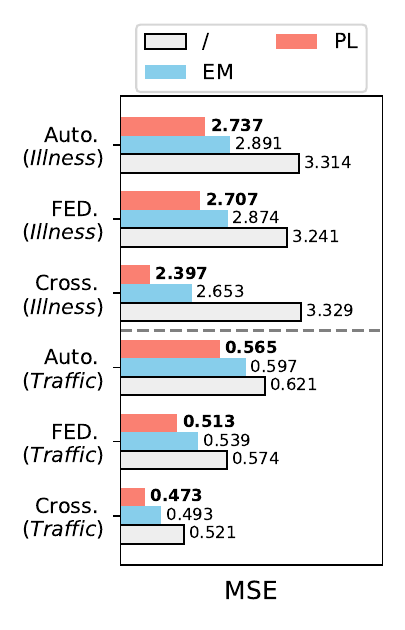}
    }%
    \subfigure[RevIN]{
        \label{fig:extra-revin}
        \includegraphics[width=0.158\textwidth,trim={10 10 10 10},clip]{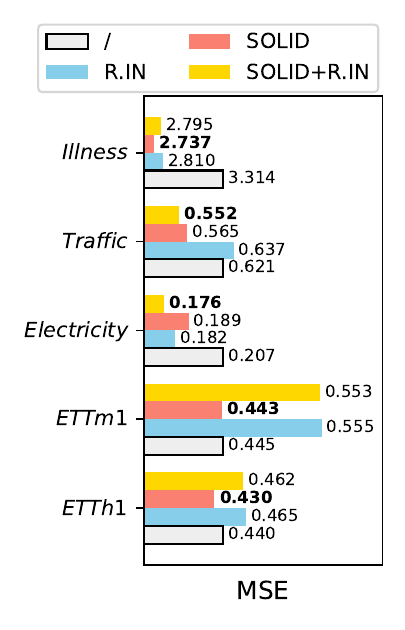}
    }%
    \subfigure[Dish-TS]{
        \label{fig:extra-dishts}
        \includegraphics[width=0.158\textwidth,trim={10 10 10 10},clip]{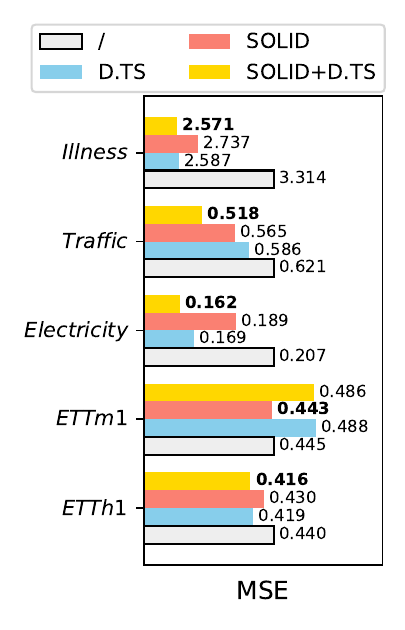}
    }%
    \subfigure[Speed]{
        \label{fig:extra-speed}
        \includegraphics[width=0.158\textwidth,trim={10 10 10 10},clip]{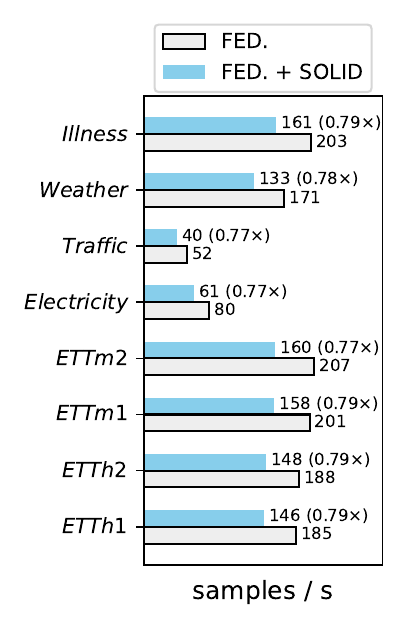}
    }%
    \label{fig:extra}
    \vspace{-0.3cm}
    \caption{Further analysis of \method{SOLID}. (\underline{a})(\underline{b}) Ablation studies for three contexts, temporal segment (T), periodic phase (P), and sample similarity (S). (\underline{c}) Studies on tuning strategies to explore adaptation on prediction layer (PL) only vs. entire model (EM). (\underline{d})(\underline{e}) Comparison studies against RevIN (R.IN) and Dish-TS (D.TS). (\underline{f}) Efficiency studies on the prediction speed.} \label{Comprehensive_Analysis}
\end{figure*}

\subsection{Further analysis for \method{SOLID}}\label{sec:further_analysis}

In this section, we conducted additional experiments to further analyze \method{SOLID}. \ifkdd{Figure \ref{Comprehensive_Analysis} presents results.}\else{Figure \ref{Comprehensive_Analysis} presents partial results, with the full results reported in Appendix \ref{sec:appendix_full_result}.}\fi

\subsubsection{Ablation studies}

We conducted ablation studies to investigate the relationship between performance and each component in the selection strategy, particularly each observable or unobservable context we propose in \sref{sec:designed_index}. The investigation involves a progression of strategy components. Specifically, we commence from original forecasts that ignore any contexts, denoted as \textbf{"/"}. We then create contextualized datasets with various selection variations. For a fair comparison, all datasets are constructed by selecting $\lambda_N$ samples from $\lambda_T$ nearest ones. We first consider \textbf{\underline{T}emporal segment} context by selecting the nearest $\lambda_N$ samples, denoted as \textbf{"T"}. Then we take \textbf{\underline{P}eriodic phase} into consideration, and select $\lambda_N$ nearest samples with phase differences less than $\lambda_P$, denoted as \textbf{"T+P"}. Finally, we add \textbf{sample \underline{S}imilarity} and select the top-$\lambda_N$ similar samples from nearest $\lambda_T$ samples with phase differences less than $\lambda_P$. This strategy mirrors \method{SOLID} and is denoted as \textbf{"T+P+S"}.
Based on the results in Figure \ref{fig:extra-ablation-illness} and \ref{fig:extra-ablation-traffic}, we showcase that each component leads to a consistent improvement in both MSE and MAE. These findings provide compelling evidence supporting the effectiveness of all components utilized in our selection strategy as well as the \method{SOLID} method.

\subsubsection{Comparative studies on adapting only prediction layer or entire model}\label{sec:exp_tuning_strategies}

All our previous experiments are performed solely by adapting the prediction layer while keeping the crucial parameters of bottom feature extractor layers unchanged. Nevertheless, it is imperative to conduct comparative experiments to validate this perspective. Specifically, we compare the results achieved by solely adapting the prediction layer against adapting the entire model. The experimental results are presented in Figure \ref{fig:extra-tuning}. It indicates that solely adapting the prediction layer consistently yields superior performance. One explanation is that it reduces the risk of over-fitting. Moreover, solely adapting the prediction layer significantly faster the speed of inference.

\subsubsection{Comparison studies against other approaches addressing distribution shifts}

In this section, we compared two widely used baselines, RevIN \cite{RevIn} and Dish-TS \cite{Dish-TS} which also address distribution shifts in time series forecasting. It is worth mentioning that they are strategies that work during training, which differs from our \method{SOLID} acting as a post-calibration method during the test process. Also, both RevIN and Dish-TS adapt the \textit{input data} to better suit the model, whereas SOLID adapts the \textit{model} to match the data distribution of the current context. Therefore, these methods are complementary and not exclusive.
We explored their joint use on Autoformer, shown in Figures \ref{fig:extra-revin} and \ref{fig:extra-dishts}. We can observe that \method{SOLID} consistently improves performance, regardless of whether models were trained with RevIN or Dish-TS. Moreover, in many cases, combining them achieves the best performance, suggesting the orthogonality of Context-driven DS (CDS) with other types of DS addressed by them.

\subsubsection{Efficiency analysis}

Furthermore, we analyze the practical efficiency of the proposed framework. For \method{Reconditionor}, it conducts a one-time analysis of residuals during the training process, thus not incurring any additional overhead during deployment. For \method{SOLID}, it incurs two steps of overhead for each prediction sample: (1) constructing the contextualized dataset, and (2) fine-tuning the model. Step (1) can be optimized using some engineering techniques, such as pre-caching intermediate embeddings and building efficient vector indexing. For step (2), since only the prediction layer is fine-tuned, the computational cost will not be too high. To further assess \method{SOLID}'s efficiency, we report the additional time overhead introduced by \method{SOLID} to FEDFormer across various datasets in Figure \ref{fig:extra-speed}. The results indicate that \method{SOLID}'s inference speed is about 80\% of traditional methods, a gap smaller than the variability introduced by different datasets, which suggests that our method does not significantly reduce inference speed.

\newcommand{\rotateablation}[1]{{#1}}

\subsubsection{Parameter sensitivity analysis and case study} 
We adjusted the hyperparameters within \method{SOLID} across multiple datasets and models to assess their impact on performance and conducted a case study to visualize the improvement. From the results, we discover that \method{SOLID} is insensitive to $\lambda_T$, which controls the time range of preceding data for selection, and $\lambda_P$, which governs the acceptable threshold for periodic phase difference. But concerning $\lambda_N$, which determines the number of similar samples to be selected for model adaptation, and $lr$, which regulates the extent of adaptation on the prediction layer for the models, they exist an optimal value and should be well selected. The detailed results are reported in Appendix \ref{sec:appendix-parameter-sensitivity} and Appendix \ref{sec:appendix-visulization}.

\section{Conclusion}
\par In this paper, we introduce context-driven distribution shift (CDS) problem in TSF and identify two significant observed contexts, including temporal segments and periodic phases, along with unobserved contexts. To address the issue, we propose a general calibration framework, including a detector, \method{Reconditionor}, to evaluate the degree of a model’s susceptibility to CDS and the necessity for model adaptation; and an adaptation framework, \method{SOLID}, for calibrating models and enhancing their performance under severe CDS. We conduct extensive experiments on 8 real-world datasets and 7 models, demonstrating the accuracy of \method{Reconditionor} in detecting CDS and the effectiveness of \method{SOLID} in adapting TSF models without substantially compromising time efficiency. This adaptation consistently leads to improved performance, which can be well explained by \method{Reconditionor}.

\stitle{Limitations and future work.} (1) Despite demonstrating consistent performance improvements, \method{SOLID} introduces an approximate additional 20\% time overhead during the testing phase. Employing further engineering optimizations to narrow this efficiency gap will be investigated in future work. (2) While \method{Reconditionor} proves effective in leveraging observed contexts to ascertain the impact of CDS, whether unobserved context can be used for detection is an interesting open question. (3) The selection of contextually similar samples relies on heuristic rules in this paper. Investigating further contexts or developing more generalized selection criteria represents a promising area of research.

\begin{acks}
Research work mentioned in this paper is supported by State Street Zhejiang University Technology Center. We would also like to thank Yusu Hong for the valuable discussion of the theory.
\end{acks}

\bibliographystyle{ACM-Reference-Format}
\bibliography{reference-kdd}

\ifkdd

\appendix

\numberwithin{equation}{section}
\section*{Appendix}

\section{Proofs for the theoretical results}\label{sec:appendix_proof}

Proofs for Theorem \ref{thm:error-global} and Theorem \ref{thm:error-context} can be found in our full version paper\footnote{\url{https://arxiv.org/abs/2310.14838}}.

\section{Details of experiments} 

\subsection{Hyper-parameters} \label{hyper-parameters}
\par For all experiments, during the training process, we use the same hyper-parameters as reported in the corresponding papers \cite{Informer, Autoformer, FEDformer, ETSformer, Crossformer, LTSF-Linear}, \eg, encoder/decoder layers, model hidden dimensions, head numbers of multi-head attention and batch size. 
\par As for hyper-parameters for adaptation process, there exists 4 major hyper-parameters for \method{SOLID}, including $\lambda_T$, $\lambda_P$, $\lambda_N$, $lr$ (We report the ratio $\nicefrac{lr}{lr_{\text{train}}}$ between adaptation learning rate $lr$ and the training learning rate $lr_{\text{train}}$ as an alternative).
We select the setting which performs the best on the validation set. The search range for the parameters is presented in Table \ref{tbl:grid-search}.

\newcommand{\rotaterange}[1]{\begin{tabular}{@{}c@{}}\multirow{10}{*}{{\rotatebox[origin=c]{90}{#1}}}\end{tabular}}

\begin{table}[htbp]
  \centering
  \caption{Hyper-parameters for \method{SOLID}.}
  \resizebox{\linewidth}{!}{
    \begin{tabular}{c|c|c|c|c}
    \hline
    \textbf{Dataset} & \textbf{$\lambda_T$} & \textbf{$\lambda_P$} & \textbf{$\lambda_N$} & \textbf{$\nicefrac{lr}{lr_{\text{train}}}$} \\
    \hline
    \hline
    \textbf{ETTh1} & \multirow{7}[14]{*}{$\{500, 1000, 2000\}$} & \rotaterange{$ \{0.02, 0.05, 0.1\}$} & \multirow{7}[14]{*}{$\{5, 10, 20\}$} & \multirow{4}[8]{*}{$\{5, 10, 20, 50\}$} \bigstrut\\
\cline{1-1}    \textbf{ETTh2} &       &       &       & \bigstrut\\
\cline{1-1}    \textbf{ETTm1} &       &       &       & \bigstrut\\
\cline{1-1}    \textbf{ETTm2} &       &       &       & \bigstrut\\
\cline{1-1}\cline{5-5}    \textbf{Electricity} &       &       &       & $\{500, 1000, 1500, 2000\}$  \bigstrut\\
\cline{1-1}\cline{5-5}    \textbf{Traffic} &       &       &       & $\{1000, 1500, 2000, 3000\}$  \bigstrut\\
\cline{1-1}\cline{5-5}    \textbf{Weather} &       &       &       & $\{5, 10, 20, 50\}$  \bigstrut\\
\cline{1-1}\cline{4-5}    \textbf{Illness} & $\{100, 200, 300\}$ &       & $\{2, 3, 5\}$ & $\{10, 20, 50, 100\}$ 
  \bigstrut\\
    \hline
    \end{tabular}%
  }
  \label{tbl:grid-search}%
\end{table}%

\subsection{Specific structures of $g_\Phi$ and $h_\theta$} \label{specific_structure_of_different_models}
\par We divide our baseline TSF models into three categories, including Encode-decoder-based, Encoder-based, and Linear-based architectures for explanation.

\begin{itemize}[leftmargin=*]
    \item Encode-decoder-based (\eg, Informer, Autoformer, FEDformer, ETSformer, and Crossformer): $h_\theta$ refers to the linear projection layer at the top of decoders, and the other parts of decoders and the entire encoders correspond to $g_\Phi$.
    \item Encoder-based (\eg, PatchTST): $h_\theta$ refers to the top linear prediction layer at the top of the encoders, and the other parts (including self-attention and embedding layers) correspond to $g_\Phi$.
    \item Linear-based (\eg, DLinear): Since Linear models only include linear layers for modeling, $h_\theta$ refers to the linear layers, and there is no $g_\Phi$.
\end{itemize}

\section{Further Experiment Results}

\subsection{Full results}\label{sec:appendix_full_result}

In this section, we report the full results of \sref{sec:main_result}. Table \ref{tbl:Main_Results_Part2} details the benchmark results for ETTm1, ETTm2, and Weather datasets, where our proposed \method{Reconditioner} identified relatively lower metrics.

\newcommand{\rotatemodel}[1]{\begin{tabular}{@{}c@{}}\multirow{6}{*}{{\rotatebox[origin=c]{90}{#1}}}\end{tabular}}

\begin{table}[htbp]
  \centering
  \small
  \caption{Extended table of Table \ref{tbl:Main_Results_Part1}. "$\bm \uparrow$": average improvements achieved by \method{\textit{SOLID}} compared to the original forecasting results. "$\bm \delta$": metrics given by \method{\textit{Reconditionor}} in two observed contexts: periodic phases ($\bm{\delta_P}$) and temporal segments ($\bm{\delta_T}$), presented in the form of $\bm{\log_{10} \delta_P\ \&\ \log_{10} \delta_T}$. \textcolor[rgb]{1,0,0}{RED} highlights a \textit{strong} CDS in periodic phases (\ie, $\bm{\log_{10} \delta_P\geq -3.2}$), while \textcolor[rgb]{ 0,  .439,  .753}{BLUE} highlights a \textit{weak} CDS in periodic phases (\ie, $\bm{\log_{10} \delta_P< -3.2}$).}
  \resizebox{\linewidth}{!}{
    \begin{tabular}{c|c|cccc|cccc|cccc}
    \toprule
    \multicolumn{2}{c|}{\textbf{Dataset}} & \multicolumn{4}{c|}{\textbf{ETTm1}} & \multicolumn{4}{c|}{\textbf{ETTm2}} & \multicolumn{4}{c}{\textbf{Weather}} \\
    \midrule
    \multicolumn{2}{c|}{\textbf{Method}} & \multicolumn{2}{c}{\textbf{/}} & \multicolumn{2}{c|}{\textbf{+\method{SOLID}}} & \multicolumn{2}{c}{\textbf{/}} & \multicolumn{2}{c|}{\textbf{+\method{SOLID}}} & \multicolumn{2}{c}{\textbf{/}} & \multicolumn{2}{c}{\textbf{+\method{SOLID}}} \\
    \midrule
    \multicolumn{2}{c|}{\textbf{Metric}} & \textbf{MSE} & \textbf{MAE} & \textbf{MSE} & \textbf{MAE} & \textbf{MSE} & \textbf{MAE} & \textbf{MSE} & \textbf{MAE} & \textbf{MSE} & \textbf{MAE} & \textbf{MSE} & \textbf{MAE} \\
    \midrule
    \midrule
    \rotatemodel{\textbf{Informer}} & \textbf{96} & 0.624  & 0.556  & \textbf{0.426 } & \textbf{0.436 } & 0.412  & 0.498  & \textbf{0.279 } & \textbf{0.379 } & 0.394  & 0.436  & \textbf{0.321 } & \textbf{0.377 } \\
          & \textbf{192} & 0.727  & 0.620  & \textbf{0.535 } & \textbf{0.503 } & 0.821  & 0.710  & \textbf{0.453 } & \textbf{0.503 } & 0.501  & 0.491  & \textbf{0.330 } & \textbf{0.367 } \\
          & \textbf{336} & 1.085  & 0.776  & \textbf{0.696 } & \textbf{0.601 } & 1.459  & 0.926  & \textbf{0.664 } & \textbf{0.612 } & 0.591  & 0.540  & \textbf{0.400 } & \textbf{0.421 } \\
          & \textbf{720} & 1.200  & 0.814  & \textbf{0.775 } & \textbf{0.631 } & 3.870  & 1.461  & \textbf{1.875 } & \textbf{0.982 } & 1.070  & 0.755  & \textbf{0.681 } & \textbf{0.576 } \\
          & \textbf{$\uparrow$} & \textcolor[rgb]{ 1,  0,  0}{} & \textcolor[rgb]{ 1,  0,  0}{} & \textbf{33.13\%} & \textbf{21.52\%} & \textcolor[rgb]{ 1,  0,  0}{} & \textcolor[rgb]{ 1,  0,  0}{} & \textbf{50.15\%} & \textbf{31.11\%} & \textcolor[rgb]{ 1,  0,  0}{} & \textcolor[rgb]{ 1,  0,  0}{} & \textbf{32.19\%} & \textbf{21.71\%} \\
          & \textbf{$\bm \delta$} & \multicolumn{4}{c|}{\textcolor[rgb]{ 0,  .439,  .753}{\textbf{-3.357 \& -2.229}}} & \multicolumn{4}{c|}{\textcolor[rgb]{ 1,  0,  0}{\textbf{-3.082 \& -1.527}}} & \multicolumn{4}{c}{\textcolor[rgb]{ 0,  .439,  .753}{\textbf{-4.495 \& -1.402}}} \\
    \midrule
    \rotatemodel{\textbf{Autoformer}} & \textbf{96} & 0.445  & 0.449  & \textbf{0.443 } & \textbf{0.448 } & 0.313  & 0.346  & \textbf{0.312 } & \textbf{0.345 } & 0.260  & 0.334  & \textbf{0.259 } & \textbf{0.334 } \\
          & \textbf{192} & 0.549  & 0.500  & \textbf{0.547 } & \textbf{0.499 } & 0.283  & 0.341  & \textbf{0.282 } & \textbf{0.340 } & 0.322  & 0.377  & \textbf{0.321 } & \textbf{0.376 } \\
          & \textbf{336} & 0.633  & 0.533  & \textbf{0.631 } & \textbf{0.532 } & 0.327  & 0.367  & \textbf{0.326 } & \textbf{0.366 } & 0.367  & 0.398  & \textbf{0.367 } & \textbf{0.397 } \\
          & \textbf{720} & 0.672  & 0.559  & \textbf{0.670 } & \textbf{0.558 } & 0.445  & 0.434  & \textbf{0.444 } & \textbf{0.433 } & 0.414  & 0.421  & \textbf{0.414 } & \textbf{0.421 } \\
          & \textbf{$\uparrow$} & \textcolor[rgb]{ 1,  0,  0}{} & \textcolor[rgb]{ 1,  0,  0}{} & \textbf{0.30\%} & \textbf{0.19\%} & \textcolor[rgb]{ 1,  0,  0}{} & \textcolor[rgb]{ 1,  0,  0}{} & \textbf{0.24\%} & \textbf{0.20\%} & \textcolor[rgb]{ 1,  0,  0}{} & \textcolor[rgb]{ 1,  0,  0}{} & \textbf{0.06\%} & \textbf{0.06\%} \\
          & \textbf{$\bm \delta$} & \multicolumn{4}{c|}{\textcolor[rgb]{ 0,  .439,  .753}{\textbf{-3.851 \& -2.851}}} & \multicolumn{4}{c|}{\textcolor[rgb]{ 0,  .439,  .753}{\textbf{-3.524 \& -1.264}}} & \multicolumn{4}{c}{\textcolor[rgb]{ 0,  .439,  .753}{\textbf{-4.236 \& -1.714}}} \\
    \midrule
    \rotatemodel{\textbf{FEDformer}} & \textbf{96} & 0.362  & 0.412  & \textbf{0.360 } & \textbf{0.410 } & 0.190  & 0.283  & \textbf{0.189 } & \textbf{0.283 } & 0.225  & 0.307  & \textbf{0.225 } & \textbf{0.307 } \\
          & \textbf{192} & 0.399  & 0.428  & \textbf{0.398 } & \textbf{0.427 } & 0.256  & 0.324  & \textbf{0.255 } & \textbf{0.324 } & 0.318  & 0.374  & \textbf{0.317 } & \textbf{0.374 } \\
          & \textbf{336} & 0.441  & 0.456  & \textbf{0.440 } & \textbf{0.455 } & 0.327  & 0.365  & \textbf{0.326 } & \textbf{0.364 } & 0.347  & 0.385  & \textbf{0.346 } & \textbf{0.385 } \\
          & \textbf{720} & 0.486  & 0.477  & \textbf{0.485 } & \textbf{0.477 } & 0.434  & 0.425  & \textbf{0.434 } & \textbf{0.424 } & 0.408  & 0.424  & \textbf{0.407 } & \textbf{0.422 } \\
          & \textbf{$\uparrow$} & \textcolor[rgb]{ 1,  0,  0}{} & \textcolor[rgb]{ 1,  0,  0}{} & \textbf{0.31\%} & \textbf{0.25\%} & \textcolor[rgb]{ 1,  0,  0}{} & \textcolor[rgb]{ 1,  0,  0}{} & \textbf{0.14\%} & \textbf{0.12\%} & \textcolor[rgb]{ 1,  0,  0}{} & \textcolor[rgb]{ 1,  0,  0}{} & \textbf{0.22\%} & \textbf{0.21\%} \\
          & \textbf{$\bm \delta$} & \multicolumn{4}{c|}{\textcolor[rgb]{ 0,  .439,  .753}{\textbf{-3.979 \& -2.896}}} & \multicolumn{4}{c|}{\textcolor[rgb]{ 0,  .439,  .753}{\textbf{-3.914 \& -1.133}}} & \multicolumn{4}{c}{\textcolor[rgb]{ 0,  .439,  .753}{\textbf{-4.159 \& -1.569}}} \\
    \midrule
    \rotatemodel{\textbf{ETSformer}} & \textbf{96} & 0.371  & 0.394  & \textbf{0.369 } & \textbf{0.392 } & 0.187  & 0.280  & \textbf{0.186 } & \textbf{0.279 } & 0.216  & 0.298  & \textbf{0.213 } & \textbf{0.295 } \\
          & \textbf{192} & 0.402  & 0.405  & \textbf{0.402 } & \textbf{0.405 } & 0.251  & 0.319  & \textbf{0.251 } & \textbf{0.318 } & 0.253  & 0.329  & \textbf{0.251 } & \textbf{0.326 } \\
          & \textbf{336} & 0.429  & 0.423  & \textbf{0.429 } & \textbf{0.423 } & 0.313  & 0.356  & \textbf{0.312 } & \textbf{0.355 } & 0.289  & 0.352  & \textbf{0.286 } & \textbf{0.348 } \\
          & \textbf{720} & 0.429  & 0.455  & \textbf{0.428 } & \textbf{0.455 } & 0.414  & 0.413  & \textbf{0.410 } & \textbf{0.410 } & 0.355  & 0.401  & \textbf{0.353 } & \textbf{0.399 } \\
          & \textbf{$\uparrow$} & \textcolor[rgb]{ 1,  0,  0}{} & \textcolor[rgb]{ 1,  0,  0}{} & \textbf{0.14\%} & \textbf{0.12\%} & \textcolor[rgb]{ 1,  0,  0}{} & \textcolor[rgb]{ 1,  0,  0}{} & \textbf{0.56\%} & \textbf{0.44\%} & \textcolor[rgb]{ 1,  0,  0}{} & \textcolor[rgb]{ 1,  0,  0}{} & \textbf{0.93\%} & \textbf{0.81\%} \\
          & \textbf{$\bm \delta$} & \multicolumn{4}{c|}{\textcolor[rgb]{ 0,  .439,  .753}{\textbf{-3.996 \& -2.900}}} & \multicolumn{4}{c|}{\textcolor[rgb]{ 0,  .439,  .753}{\textbf{-3.541 \& -1.188}}} & \multicolumn{4}{c}{\textcolor[rgb]{ 0,  .439,  .753}{\textbf{-4.614 \& -1.793}}} \\
    \midrule
    \rotatemodel{\textbf{Crossformer}} & \textbf{96} & 0.312  & 0.367  & \textbf{0.311 } & \textbf{0.367 } & 0.770  & 0.599  & \textbf{0.689 } & \textbf{0.580 } & 0.151  & 0.219  & \textbf{0.151 } & \textbf{0.219 } \\
          & \textbf{192} & 0.350  & 0.391  & \textbf{0.348 } & \textbf{0.390 } & 0.567  & 0.516  & \textbf{0.450 } & \textbf{0.512 } & 0.196  & 0.264  & \textbf{0.196 } & \textbf{0.264 } \\
          & \textbf{336} & 0.407  & 0.427  & \textbf{0.403 } & \textbf{0.425 } & 0.830  & 0.637  & \textbf{0.616 } & \textbf{0.613 } & 0.246  & 0.307  & \textbf{0.246 } & \textbf{0.306 } \\
          & \textbf{720} & 0.648  & 0.580  & \textbf{0.538 } & \textbf{0.515 } & 1.754  & 1.010  & \textbf{1.037 } & \textbf{0.786 } & 0.311  & 0.357  & \textbf{0.311 } & \textbf{0.357 } \\
          & \textbf{$\uparrow$} & \textcolor[rgb]{ 1,  0,  0}{} & \textcolor[rgb]{ 1,  0,  0}{} & \textbf{6.81\%} & \textbf{3.77\%} & \textcolor[rgb]{ 1,  0,  0}{} & \textcolor[rgb]{ 1,  0,  0}{} & \textbf{28.75\%} & \textbf{9.81\%} & \textcolor[rgb]{ 1,  0,  0}{} & \textcolor[rgb]{ 1,  0,  0}{} & \textbf{0.11\%} & \textbf{0.10\%} \\
          & \textbf{$\bm \delta$} & \multicolumn{4}{c|}{\textcolor[rgb]{ 0,  .439,  .753}{\textbf{-3.369 \& -2.148}}} & \multicolumn{4}{c|}{\textcolor[rgb]{ 1,  0,  0}{\textbf{-2.979 \& -1.767}}} & \multicolumn{4}{c}{\textcolor[rgb]{ 0,  .439,  .753}{\textbf{-4.521 \& -1.423}}} \\
    \midrule
    \rotatemodel{\textbf{DLinear}} & \textbf{96} & 0.310  & 0.353  & \textbf{0.306 } & \textbf{0.349 } & 0.169  & 0.262  & \textbf{0.168 } & \textbf{0.261 } & 0.176  & 0.238  & \textbf{0.175 } & \textbf{0.232 } \\
          & \textbf{192} & 0.340  & 0.369  & \textbf{0.339 } & \textbf{0.368 } & 0.232  & 0.308  & \textbf{0.230 } & \textbf{0.306 } & 0.218  & 0.276  & \textbf{0.216 } & \textbf{0.272 } \\
          & \textbf{336} & 0.374  & 0.390  & \textbf{0.373 } & \textbf{0.389 } & 0.299  & 0.360  & \textbf{0.296 } & \textbf{0.356 } & 0.262  & 0.312  & \textbf{0.261 } & \textbf{0.308 } \\
          & \textbf{720} & 0.440  & 0.435  & \textbf{0.437 } & \textbf{0.434 } & 0.439  & 0.451  & \textbf{0.434 } & \textbf{0.447 } & 0.326  & 0.365  & \textbf{0.324 } & \textbf{0.359 } \\
          & \textbf{$\uparrow$} &       &       & \textbf{0.61\%} & \textbf{0.45\%} &       &       & \textbf{0.97\%} & \textbf{0.80\%} &       &       & \textbf{0.61\%} & \textbf{1.68\%} \\
          & \textbf{$\bm \delta$} & \multicolumn{4}{c|}{\textcolor[rgb]{ 1,  0,  0}{\textbf{-3.057 \& -2.786}}} & \multicolumn{4}{c|}{\textcolor[rgb]{ 0,  .439,  .753}{\textbf{-3.521 \& -1.697}}} & \multicolumn{4}{c}{\textcolor[rgb]{ 0,  .439,  .753}{\textbf{-4.386 \& -1.793}}} \\
    \midrule
    \rotatemodel{\textbf{PatchTST}} & \textbf{96} & 0.292  & 0.345  & \textbf{0.290 } & \textbf{0.344 } & 0.166  & 0.257  & \textbf{0.165 } & \textbf{0.255 } & 0.152  & 0.200  & \textbf{0.151 } & \textbf{0.199 } \\
          & \textbf{192} & 0.333  & 0.370  & \textbf{0.332 } & \textbf{0.370 } & 0.220  & 0.293  & \textbf{0.219 } & \textbf{0.292 } & 0.197  & 0.243  & \textbf{0.196 } & \textbf{0.242 } \\
          & \textbf{336} & 0.366  & 0.390  & \textbf{0.365 } & \textbf{0.389 } & 0.275  & 0.329  & \textbf{0.274 } & \textbf{0.328 } & 0.250  & 0.285  & \textbf{0.250 } & \textbf{0.285 } \\
          & \textbf{720} & 0.420  & 0.424  & \textbf{0.419 } & \textbf{0.423 } & 0.366  & 0.385  & \textbf{0.365 } & \textbf{0.384 } & 0.316  & 0.334  & \textbf{0.316 } & \textbf{0.334 } \\
          & \textbf{$\uparrow$} &       &       & \textbf{0.35\%} & \textbf{0.20\%} &       &       & \textbf{0.39\%} & \textbf{0.40\%} &       &       & \textbf{0.22\%} & \textbf{0.19\%} \\
          & \textbf{$\bm \delta$} & \multicolumn{4}{c|}{\textcolor[rgb]{ 0,  .439,  .753}{\textbf{-3.356 \& -2.879}}} & \multicolumn{4}{c|}{\textcolor[rgb]{ 0,  .439,  .753}{\textbf{-3.471 \& -1.454}}} & \multicolumn{4}{c}{\textcolor[rgb]{ 0,  .439,  .753}{\textbf{-3.986 \& -1.573}}} \\
    \midrule
    \bottomrule
    \end{tabular}%
  }
  \label{tbl:Main_Results_Part2}%
\end{table}%


\begin{figure}[t]
    \centering  
    \subfigure[context: temporal segments]{
        \label{fig:intro-confounder-time}
        \includegraphics[width=0.22\textwidth,trim={10 10 10 10},clip]{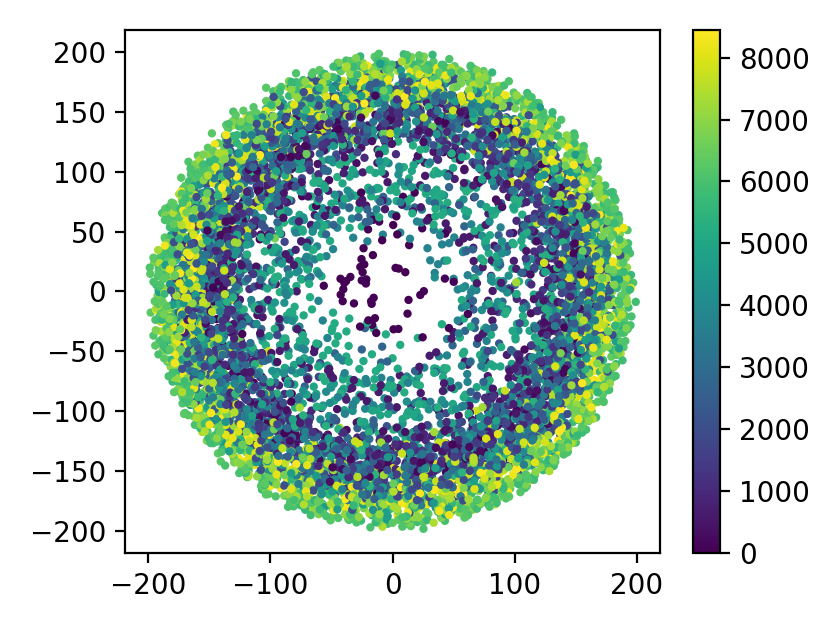}
    }
    \subfigure[context: periodic phases]{
        \label{fig:intro-confounder-phases}
        \includegraphics[width=0.22\textwidth,trim={10 10 10 10},clip]{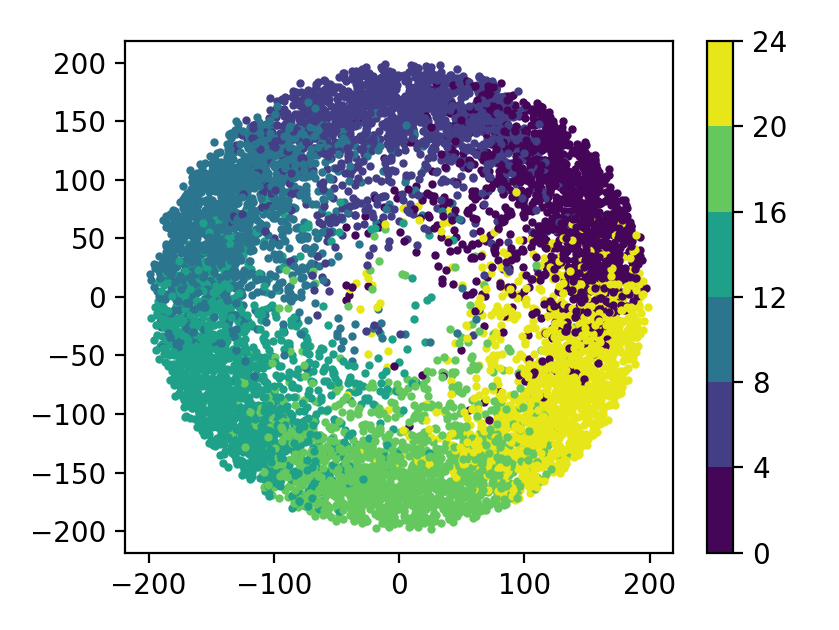}
    }
    \caption{Illustration for two contexts on ETTh1 training dataset, which has a period of 24 and length of 8760. Different colors represent different values of contexts.}
    \label{fig:intro-confounder}
\end{figure}

\subsection{Visualizing the impact of contexts on data distribution}\label{sec:appendix_context_visual}

To demonstrate the influence of context on data distribution, we utilized the Autoformer \cite{Autoformer} to extract latent representations from the ETTh1 dataset and applied PCA for dimensionality reduction and visualization. This ensures the spatial positions of the data points in the figure represent their original distribution. 

We marked two observed contexts -- temporal segments and periodic phases -- on the figures. Unobserved contexts, being difficult to visualize, are not displayed. Figure \ref{fig:intro-confounder-time} reveals a progressive outward shift in data distribution with increasing temporal segments. Similarly, Figure \ref{fig:intro-confounder-phases} shows that changes in the periodic phase lead to a rotational shift in data distribution. Note that if CDS doesn't exist, different colors (denoting contexts) should be scattered and randomly mixed in the figure since colors are independent of spatial positions, which is not the case shown in Figure \ref{fig:intro-confounder}. Therefore, it is evident that these contexts markedly affect data distribution.

\subsection{Correlation of $\delta_T$ and MAE improvement}\label{sec:app_corr_t}

\begin{figure}[t]
    \centering
    \includegraphics[width=0.47\textwidth,trim={10 10 10 10},clip]{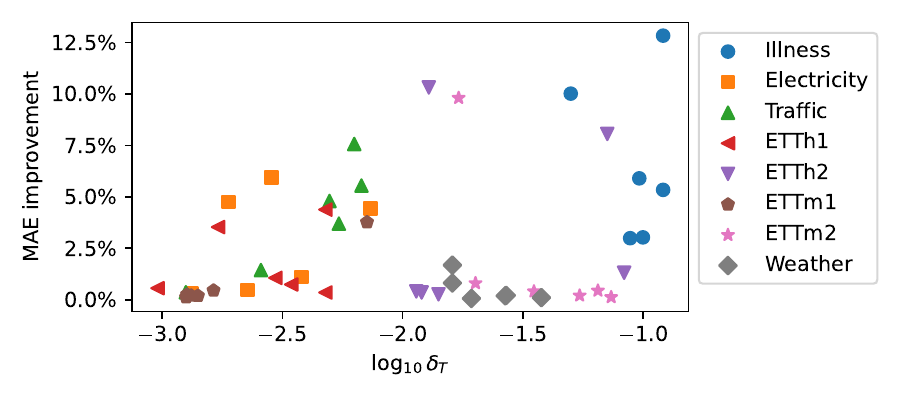}
    \caption{The relationship $\bm{\log_{10} \delta_T}$ (X-axis) and MAE improvements achieved by \method{SOLID} (Y-axis) for 8 datasets and 6 models. }
    \label{fig:indicator-time}
\end{figure}

In \sref{sec:relation_reconditionor_solid}, we found that the correlation between $\delta_P$ and MAE improvement is very strong. We display the correlation between $\delta_T$ and MAE improvement in Figure \ref{fig:indicator-time} as well. Unlike $\delta_P$, the relation between $\delta_T$ and MAE improvement is not as straightforward. Despite showing an increasing trend in Figure \ref{fig:indicator-time}, the Spearman correlation coefficient is merely 0.2129. This implies that, while there is evidence of CDS stemming from temporal segments as detected by \method{Reconditionor}, \method{SOLID} is comparatively less effective at mitigating it when compared to CDS caused by periodic phases. One possible explanation is that data generated within the same phase tends to follow a more predictable pattern, while data within the same temporal segment exhibits greater diversity and uncertainty, which may limit the utility of selecting data from the same segment to address CDS caused by temporal segments. We leave further investigation of it to future work.

\begin{figure*}[t!]
    \centering  
    \caption{Parameter sensitivity results for $\lambda_T$, $\lambda_P$, $\lambda_N$ and $lr$, on Illness and Traffic datasets.}
    \subfigure[Illness-$\lambda_T$]{
        \includegraphics[width=0.24\textwidth]{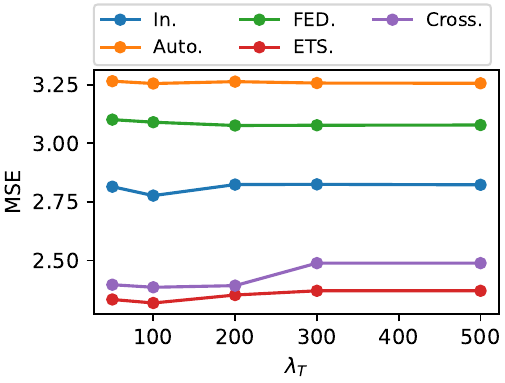}
    }%
    \subfigure[Illness-$\lambda_P$]{
        \includegraphics[width=0.24\textwidth]{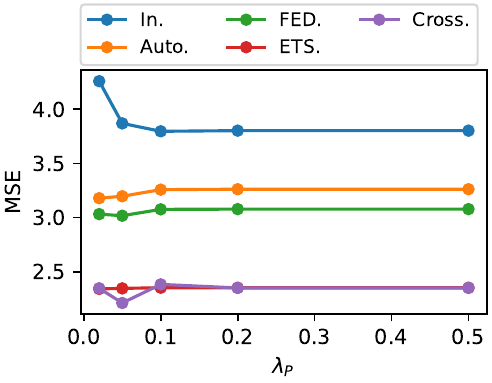}
    }%
    \subfigure[Illness-$\lambda_N$]{
        \includegraphics[width=0.24\textwidth]{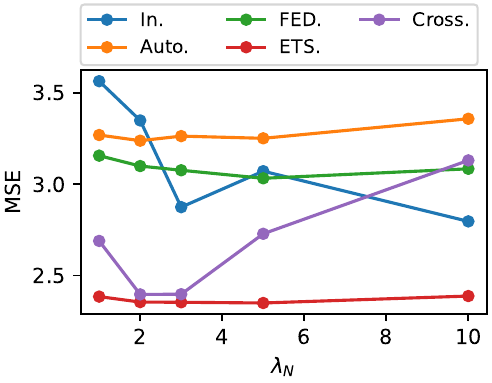}
    }%
    \subfigure[Illness-$lr$]{
        \includegraphics[width=0.24\textwidth]{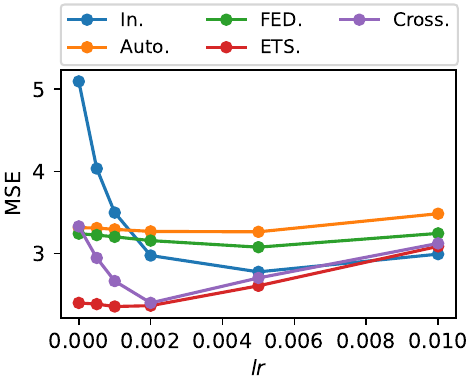}
    }
    \subfigure[Traffic-$\lambda_T$]{
        \includegraphics[width=0.24\textwidth]{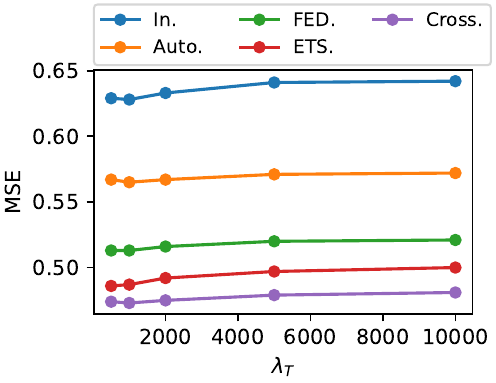}
    }%
    \subfigure[Traffic-$\lambda_P$]{
        \includegraphics[width=0.24\textwidth]{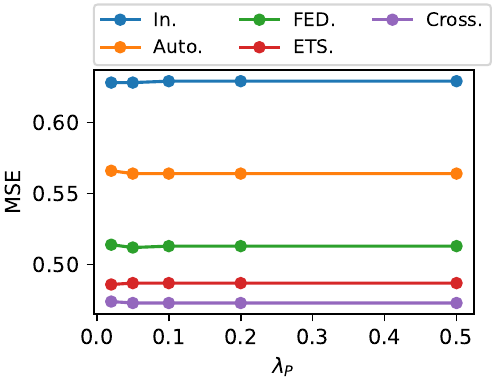}
    }%
    \subfigure[Traffic-$\lambda_N$]{
        \includegraphics[width=0.24\textwidth]{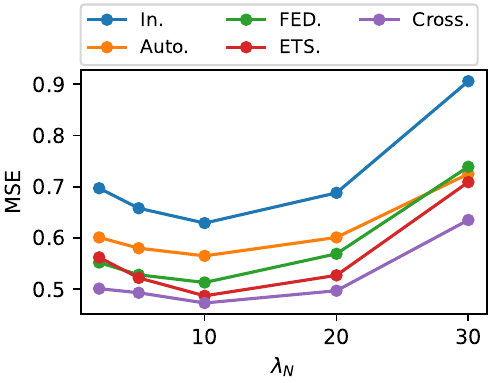}
    }%
    \subfigure[Traffic-$lr$]{
        \includegraphics[width=0.24\textwidth]{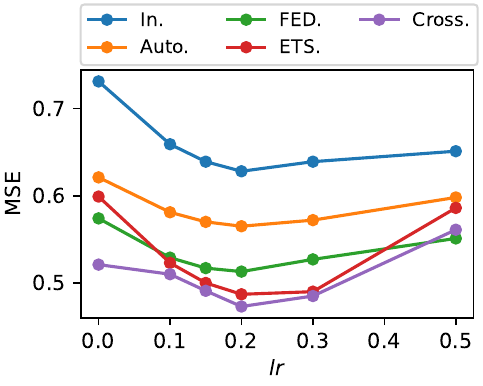}
    }
    \label{fig:parameter_sensitivity}
\end{figure*}

\begin{figure*}[t]
    \centering  
    \caption{Case study and visualization of our proposed methods on various cases across different datasets and models, where \textcolor[rgb]{0, 0.439, 0.753}{BLUE} lines represent ground-truth, \textcolor{orange}{ORANGE} lines represent original forecasting results, and \textcolor[rgb]{0, 0.5, 0}{GREEN} lines represent the forecasts after employing our approach. Besides, $ts120$ denotes this case is sampled at timestep-120 in the test set, and so on.}
    \subfigure[Illness-ts120]{
        \includegraphics[width=0.24\textwidth]{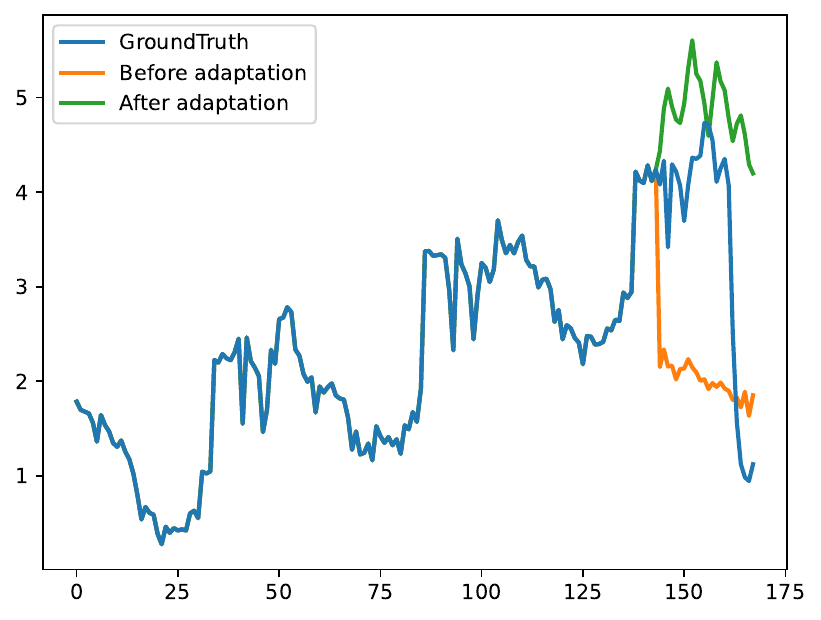}
    }%
    \subfigure[Illness-ts160]{
        \includegraphics[width=0.24\textwidth]{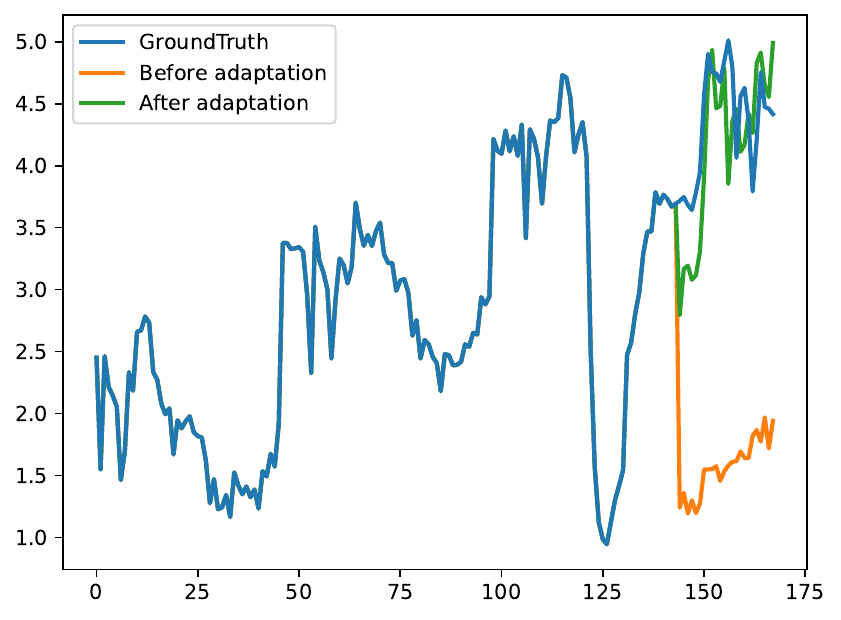}
    }%
    \subfigure[ETTh1-ts250]{
        \includegraphics[width=0.24\textwidth]{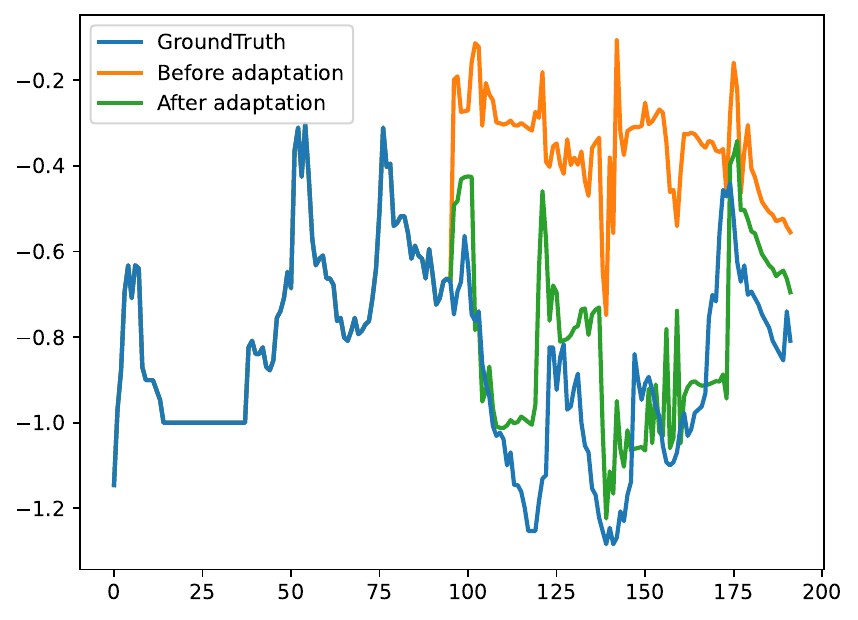}
    }%
    \subfigure[ETTh1-ts650]{
        \includegraphics[width=0.24\textwidth]{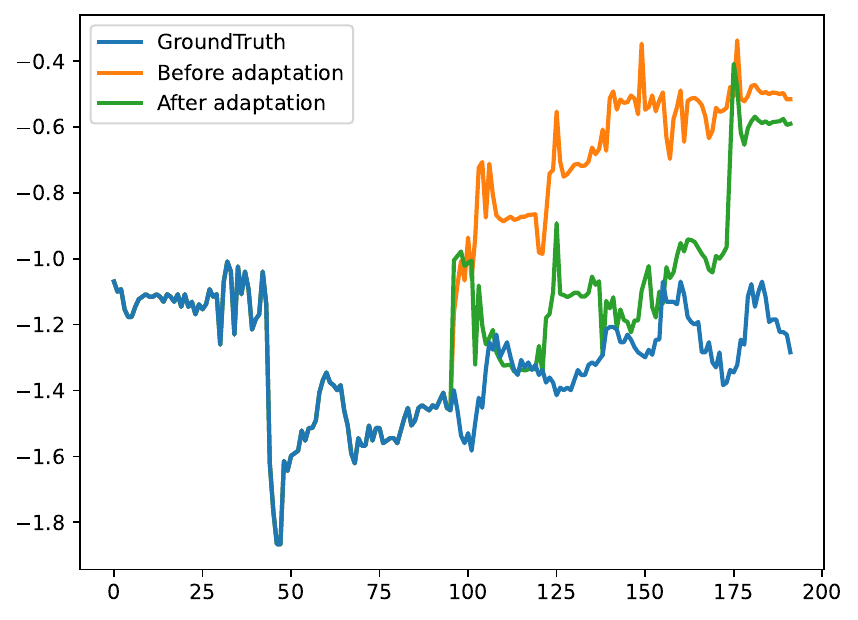}
    }
    \label{fig:visualization-of-prediction}
\end{figure*}

\subsection{Parameter sensitivity analysis} \label{sec:appendix-parameter-sensitivity}

\par Within our proposed approach, several crucial parameters are presented, including $\lambda_T$, which controls the time range of preceding data for selection; $\lambda_P$, which governs the acceptable threshold for periodic phase difference; $\lambda_N$, which determines the number of similar samples to be selected for model adaptation; And $lr$, which regulates the extent of adaptation on the prediction layer for the models. The search range for these parameters is presented in Table \ref{tbl:grid-search}. The results of parameter sensitivity analysis are visually presented in Figure \ref{fig:parameter_sensitivity}.

\par Firstly, we discover that our proposed \method{SOLID} is insensitive to $\lambda_T$ and $\lambda_P$ parameters, based on the results obtained.
Regarding the $\lambda_N$ parameter, the selection of insufficient samples would increase the variance during adaptation due to the data shortage. Conversely, selecting excessive samples carries the risk of including samples with unobserved irrelevant contexts, thereby deteriorating the model’s performance.
Lastly, For parameter $lr$, a very small $lr$ leads to inadequate model adaptation, preventing the model from effectively addressing CDS and resulting in a bias towards the test sample (Theorem \ref{thm:error-global}). Conversely, a too large value for $lr$ can lead to excessive adaptation, which also risks bringing in substantial variance to the model (Theorem \ref{thm:error-context}). Therefore, a well-selected learning rate will contribute to an optimal trade-off between bias and variance. 

\subsection{Case study} \label{sec:appendix-visulization}

\par In addition, we conduct a case study and visualize our proposed method on various cases across different datasets and models, as presented in Figure \ref{fig:visualization-of-prediction}. 
\par Specifically, we plot the figures on these combinations: Illness (timestep 120~\&~160, variate 7) on Crossformer, and ETTh1 (timestep 250~\&~650, variate 7) on Informer, corresponding to Figure \ref{fig:visualization-of-prediction} (a)-(d).
The visualization vividly illustrates the effectiveness of our approach in improving forecasting performance.

\else

\appendix

\numberwithin{equation}{section}
\section*{Appendix}

\section{Proofs for the theoretical results}\label{sec:appendix_proof}

\ifkdd

Proofs for Theorem \ref{thm:error-global} and Theorem \ref{thm:error-context} can be found in our full version paper\footnote{\url{https://arxiv.org/abs/2310.14838}}. 

\else

In this section, we give the proofs for the expected risks in Theorem \ref{thm:error-global} and Theorem \ref{thm:error-context}. To start with, we give the following lemma to perform the usual bias/variance decomposition. The risk notations $\mathcal{R}$ and $\mathcal{R}^*$ are defined in \sref{sec:theoretical_analysis}.

\begin{lemma}[Risk decomposition]
    \label{lma:risk_decomposition}
    For a set of random variables $\alpha_1, \cdots, \alpha_K$ used as parameters for each group, we have:
    \begin{align*}
        \mathcal{R}(\alpha_1, \cdots, \alpha_K) - \mathcal{R^*} = 
        \underbrace{
            \sum_{i=1}^K \left\| 
                \mathbb E[\alpha_i] - \theta_i
            \right\|^2_{\psi_i}
        }_{\text{bias part}}
        +\underbrace{
            \sum_{i=1}^K \mathbb E\left[\left\|
                \alpha_i -\mathbb E[\alpha_i]
            \right\|^2_{\psi_i}
        \right]}_{\text{variance part}},
    \end{align*}
    where $\psi_i = X_i^\top X_i$, $\overline \theta = (\sum_{i=1}^K{\psi_i})^{-1} (\sum_{i=1}^K{\psi_i \theta_i})$, and $ \|\theta\|^2_{\psi_i} = \theta^\top \psi_i \theta$.
\end{lemma}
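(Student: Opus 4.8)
The plan is to reduce the expected excess risk to a sum of per-group estimation errors measured in the prediction (Mahalanobis) norm $\|\cdot\|_{\psi_i}$, and then apply the textbook bias--variance identity within each group. Throughout I would use the identity $\|X_i v\|_2^2 = v^\top \psi_i v = \|v\|_{\psi_i}^2$ coming from $\psi_i = X_i^\top X_i$, so that predictions and parameters are compared in the same norm that appears in the statement.

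First I would expand the risk. Substituting $Y_i = X_i\theta_i + \epsilon_i$ into $\mathcal{R}(\alpha_1,\dots,\alpha_K) = \mathbb{E}[\sum_i \|Y_i - X_i\alpha_i\|_2^2]$ gives, group by group, the term $\|X_i(\theta_i - \alpha_i) + \epsilon_i\|_2^2$. Expanding the square yields $\|X_i(\theta_i-\alpha_i)\|_2^2$, a cross term $2\langle X_i(\theta_i - \alpha_i), \epsilon_i\rangle$, and $\|\epsilon_i\|_2^2$. The key point is that the noise $\epsilon_i$ entering the risk is an independent (fresh) draw, uncorrelated with $\alpha_i$; together with $\mathbb{E}[\epsilon_i]=0$ this kills the cross term in expectation. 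Using $\mathbb{E}[\|\epsilon_i\|_2^2] = n_i\sigma^2$, I obtain $\mathcal{R}(\alpha_1,\dots,\alpha_K) = \sum_i \mathbb{E}[\|\theta_i - \alpha_i\|_{\psi_i}^2] + \sum_i n_i\sigma^2$. Since the first sum is nonnegative and vanishes at the deterministic choice $\alpha_i = \theta_i$, the minimum risk is the irreducible noise floor $\mathcal{R}^* = \sum_i n_i\sigma^2$ (no rank assumption on $X_i$ is needed). Subtracting collapses the excess risk to $\mathcal{R}(\alpha_1,\dots,\alpha_K) - \mathcal{R}^* = \sum_i \mathbb{E}[\|\theta_i - \alpha_i\|_{\psi_i}^2]$.

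Finally I would decompose each remaining term by inserting $\pm\,\mathbb{E}[\alpha_i]$ and writing $\theta_i - \alpha_i = (\theta_i - \mathbb{E}[\alpha_i]) - (\alpha_i - \mathbb{E}[\alpha_i])$. Expanding the squared $\psi_i$-norm produces a deterministic bias square $\|\mathbb{E}[\alpha_i] - \theta_i\|_{\psi_i}^2$, a variance term $\mathbb{E}[\|\alpha_i - \mathbb{E}[\alpha_i]\|_{\psi_i}^2]$, and a cross term that is a fixed linear functional of $\mathbb{E}[\alpha_i - \mathbb{E}[\alpha_i]] = 0$, hence zero. Summing over $i$ gives exactly the claimed decomposition. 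I would also note that the quantity $\overline\theta$ appearing in the statement plays no role in the lemma itself; it is only carried over for the later application to the global regressor in Theorem \ref{thm:error-global}.

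The only genuine subtlety is the vanishing of the two cross terms, both of which hinge on careful bookkeeping of independence and centering: the first needs the evaluation noise $\epsilon_i$ to be independent of the estimator $\alpha_i$ (which is precisely the fixed-design convention that makes the variance in Theorem \ref{thm:error-context} come out with the correct sign, $+K\sigma^2 d$ rather than $-K\sigma^2 d$), and the second needs only the algebraic identity $\mathbb{E}[\alpha_i - \mathbb{E}[\alpha_i]] = 0$. Everything else is routine manipulation of the $\psi_i$-norm.
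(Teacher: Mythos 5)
Your proof is correct, and it is genuinely more self-contained than the paper's: the paper proves Lemma \ref{lma:risk_decomposition} with a one-line citation to Proposition 3.3 of Bach's \emph{Learning Theory from First Principles}, whereas you reconstruct that result from scratch and extend it to the $K$-group setting by summing. The two routes are mathematically the same underneath (expand $Y_i - X_i\alpha_i = X_i(\theta_i-\alpha_i)+\epsilon_i$, kill the noise cross term, identify $\mathcal{R}^* = \sum_i n_i\sigma^2$, then the textbook bias--variance split around $\mathbb{E}[\alpha_i]$), but your version makes explicit two things the citation leaves implicit. First, the fixed-design convention that the evaluation noise $\epsilon_i$ is a fresh draw independent of the estimators $\alpha_i$: this is essential, since in Theorems \ref{thm:error-global} and \ref{thm:error-context} the $\alpha_i$ are the fitted regressors, which are functions of the training noise, and without the fresh-noise convention the cross term $2\,\mathbb{E}[\langle X_i(\theta_i-\alpha_i), \epsilon_i\rangle]$ would not vanish and the in-sample variance would come out negative ($-K\sigma^2 d$ rather than $+K\sigma^2 d$), exactly as you observe. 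Second, that $\mathcal{R}^* = \sum_i n_i \sigma^2$ holds with no rank or invertibility assumption on $X_i$, because $\|\theta_i - \alpha_i\|^2_{\psi_i} \ge 0$ for the positive semidefinite $\psi_i$ and vanishes at $\alpha_i = \theta_i$; note only that when some $X_i$ is rank-deficient the minimizer is non-unique, though the minimum value is unaffected. Your per-group use of $\mathbb{E}[\|\epsilon_i\|_2^2] = n_i\sigma^2$ matches the paper's convention $\mathbb{E}[\epsilon_j\epsilon_j^\top] = \sigma^2 I_n$ in its proof of Theorem \ref{thm:error-global}, and your remark that $\overline\theta$ is inert in the lemma (it is only consumed later by Theorem \ref{thm:error-global}) is also accurate. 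In short: what the paper buys by citation --- brevity --- your derivation buys back as transparency about exactly which independence and centering facts the decomposition rests on.
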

\begin{proof}
    This is a direct corollary of \cite[Proposition 3.3]{bach2023learning}.
\end{proof}

Based on the risk decomposition, we give the computation for GLR and CLR respectively, as follows.
\allowdisplaybreaks

\subsection{Proof for Theorem \ref{thm:error-global}} \label{sec:appendix_thm_global}

Set $\alpha_1=\cdots=\alpha_K=\hat\theta$ in Lemma \ref{lma:risk_decomposition}, we obtain:
\begin{align*}
    \mathcal{R}\underbrace{\left(
        \hat\theta,\cdots,\hat\theta
    \right)}_{K} - \mathcal{R^*} = 
    \underbrace{
        \sum_{i=1}^K \left\| 
            \mathbb E[\hat\theta] - \theta_i
        \right\|^2_{\psi_i}
    }_{\text{bias part}}
    +\underbrace{
        \sum_{i=1}^K \mathbb E\left[\left\|
            \alpha_i -\mathbb E[\hat\theta]
        \right\|^2_{\psi_i}
    \right]}_{\text{variance part}}.
\end{align*}

We first compute the expectation in the bias part. Recall that,
\begin{align*}
    \hat \theta=\arg\min_{\theta}\sum_{i=1}^K\|Y_i-X_i\theta\|_2^2,
\end{align*}
which has a closed-form solution, as follows:
\begin{align}
    \label{eq:thm1_hat_theta}
    \hat \theta=\left(
    	\sum_{i=1}^K \psi_i
    \right)^{-1} \left(
    	\sum_{i=1}^K X_i^\top Y_i
    \right)
    =\left(
    	\sum_{i=1}^K \psi_i
    \right)^{-1} \left(
    	\sum_{i=1}^K X_i^\top (X_i\theta_i+\epsilon_i)
    \right).
\end{align}

Therefore, we can obtain the expectation:
\begin{align}
    \nonumber
    \mathbb E[\hat \theta]&=\left(
    	\sum_{i=1}^K \psi_i
    \right)^{-1} \left(
    	\sum_{i=1}^K X_i^\top (X_i\theta_i+\mathbb E[\epsilon_i])
    \right)\\
    \label{eq:thm1_e_hat_theta}
        &=\left(
    	\sum_{i=1}^K \psi_i
    \right)^{-1} \left(
    	\sum_{i=1}^K \psi_i\theta_i
    \right):=\overline \theta,
\end{align}
which implies the bias part. Then we compute the variance part $V$:
\begin{align*}
    V&=\sum_{i=1}^K \mathbb E\left[
    	\left\|\hat \theta-\mathbb E[\hat \theta]\right\|^2_{\psi_i}
    \right]  \\
    &=\sum_{i=1}^K \mathbb  E\left[
    	\left\|
    		\left(
                \sum_{j=1}^K \psi_j
            \right)^{-1} \left(
                \sum_{j=1}^K X_j^\top \epsilon_j
            \right)
    	\right\|^2_{\psi_i}
    \right] \cdots \text{by \Eqref{eq:thm1_hat_theta} and \Eqref{eq:thm1_e_hat_theta}}\\
    &=\sum_{i=1}^K \mathbb  E\left[
    	\left(
        	\sum_{j=1}^K X_j^\top \epsilon_j
        \right)^\top
        \underbrace{\left(
        	\sum_{j=1}^K \psi_j
        \right)^{-1}
    	\psi_i
        \left(
        	\sum_{j=1}^K \psi_j
        \right)^{-1}}_{\text{denoted by }P_i} 
        \left(
        	\sum_{j=1}^K X_j^\top \epsilon_j
        \right)
    \right] \\
    &=\sum_{i=1}^K \mathbb  E\left[
    	\left(
        	\sum_{j=1}^K X_j^\top \epsilon_j
        \right)^\top
        P_i
        \left(
        	\sum_{j=1}^K X_j^\top \epsilon_j
        \right)
    \right]  \\
    &=\sum_{i=1}^K \mathbb  E\left[
    	\left(
        	\sum_{j=1}^K  \epsilon_j^\top X_j
        \right)
        P_i
        \left(
        	\sum_{j=1}^K X_j^\top \epsilon_j
        \right)
    \right]  \\
    &=\sum_{i=1}^K \mathbb  E\left[
    	\sum_{j=1}^K\sum_{k=1}^K
    	\epsilon_j^\top X_j
        P_i
    	X_k^\top \epsilon_k
    \right]  \\
    &=\sum_{i=1}^K \sum_{j=1}^K \sum_{k=1}^K \mathbb E\left[
    \epsilon_j^\top X_j
    P_i
    X_k^\top \epsilon_k
    \right]  \\
    &=\sum_{i=1}^K \sum_{j=1}^K\mathbb  E\left[
	\epsilon_j^\top X_j P_i X_j^\top \epsilon_j
    \right] \cdots \text{by }\mathbb E[\epsilon_j^\top X_j P_i X_k^\top \epsilon_k]=0\text{ if }j\neq k\\
    &=\sum_{i=1}^K \sum_{j=1}^K \mathbb E\left[
    	\text{tr}(X_j P_i X_j^\top \epsilon_j \epsilon_j^\top)
    \right]   \cdots \text{by tr}(AB)=\text{tr}(BA)  \\
    &=\sigma^2 \sum_{i=1}^K \sum_{j=1}^K
    	\text{tr}(X_j P_i X_j^\top)
    \cdots \text{by }\mathbb E[\epsilon_j \epsilon_j^\top]=\sigma^2 I_n\\
    &=\sigma^2 \sum_{i=1}^K \sum_{j=1}^K \text{tr}\left(
    	X_j
    	\left(
        	\sum_{k=1}^K \psi_k
        \right)^{-1}
    	\psi_i
        \left(
        	\sum_{k=1}^K \psi_k
        \right)^{-1}
    	X_j^\top
    \right)\cdots \text{by plugging }P_i\\
    &=\sigma^2 \sum_{i=1}^K \sum_{j=1}^K \text{tr}\left(
    	\psi_j
    	\left(
        	\sum_{k=1}^K \psi_k
        \right)^{-1}
    	\psi_i
        \left(
        	\sum_{k=1}^K \psi_k
        \right)^{-1}
    \right)\cdots \text{by }\psi_j=X_j^\top X_j \\
    &=\sigma^2 \text{tr}\left(\sum_{i=1}^K \sum_{j=1}^K 
    \psi_j
    \left(
        \sum_{k=1}^K \psi_k
    \right)^{-1}
    \psi_i
    \left(
        \sum_{k=1}^K \psi_k
    \right)^{-1}
    \right) \\
    &=\sigma^2 \text{tr}\left(
    	\left(
        	\sum_{j=1}^K \psi_j
        \right)
    	\left(
        	\sum_{k=1}^K \psi_k
        \right)^{-1}
    	\left(
        	\sum_{i=1}^K \psi_i
        \right)
        \left(
        	\sum_{k=1}^K \psi_k
        \right)^{-1}
    \right) \\
    &=\sigma^2 \text{tr}(I\cdot I)  \\
    &= \sigma^2d.  
\end{align*}

\subsection{Proof for Theorem \ref{thm:error-context}} \label{sec:appendix_thm_context}

Set $\alpha_i=\hat\theta_i$ in Lemma \ref{lma:risk_decomposition} for $i\in[K]$, we obtain:
\begin{align}
    \label{eq:appendix_thm2_risk}
    \mathcal{R}\left(
        \hat\theta_1,\cdots,\hat\theta_K
    \right) - \mathcal{R^*} = 
    \underbrace{\sum_{i=1}^K 
        \left\| 
            \mathbb E[\hat\theta_i] - \theta_i
        \right\|^2_{\psi_i}
    }_{\text{bias part}}
    +  \underbrace{\sum_{i=1}^K
        \mathbb E\left[\left\|
            \hat\theta_i -\mathbb E[\hat\theta_i]
        \right\|^2_{\psi_i}
    \right]}_{\text{variance part}}.
\end{align}

From Proposition 3.4 and Proposition 3.5 in \cite{bach2023learning}, for each $i\in[K]$:
\begin{align}
    \label{eq:appendix_thm2_ref}
    \left\| 
        \mathbb E[\hat\theta_i] - \theta_i
    \right\|^2_{\psi_i}
     =0,\quad 
        \mathbb E\left[\left\|
            \hat\theta_i -\mathbb E[\hat\theta_i]
        \right\|^2_{\psi_i}
    \right] = \sigma^2 d.
\end{align}

Combining \Eqref{eq:appendix_thm2_risk} and \Eqref{eq:appendix_thm2_ref}, we obtain the desired result.

\fi

\ifkdd
\else

\begin{table*}[t]
  \centering
  \caption{The statistics of 8 datasets, including the number of variates, total timesteps, and the frequency of data sampling.}
    \begin{tabular}{ccccccccc}
    \toprule
    \textbf{Dataset} & \textbf{ETTh1} & \textbf{ETTh2} & \textbf{ETTm1} & \textbf{ETTm2} & \textbf{Electricity} & \textbf{Traffic} & \textbf{Weather} & \textbf{Illness} \\
    \midrule
    \midrule
    \textbf{Variates} & 7     & 7     & 7     & 7     & 321   & 862   & 21    & 7 \\
    \textbf{Timesteps} & 17,420 & 17,420 & 69,680 & 69,680 & 26,304 & 17,544 & 52,695 & 966 \\
    \textbf{Frequency} & 1hour & 1hour & 15min & 15min & 1hour & 1hour & 10min & 1week \\
    \bottomrule
    \end{tabular}%
  \label{datasets}%
\end{table*}%
\fi

\section{Details of experiments} 

\subsection{Datasets details} \label{dataset_detailed}

\ifkdd

We conduct our experiment on 8 popular datasets following previous researches \cite{Informer, Autoformer}, which are publicly available at \url{https://github.com/zhouhaoyi/Informer2020} and \url{https://github.com/thuml/Autoformer}.

\else

We conduct our experiment on 8 popular datasets following previous researches \cite{Informer, Autoformer}. The statistics of these datasets are summarized in Table \ref{datasets}, and publicly available at \url{https://github.com/zhouhaoyi/Informer2020} and \url{https://github.com/thuml/Autoformer}.

\fi

\par \textbf{(1) ETTh1/ETTh2/ETTm1/ETTm2}. ETT dataset contains 7 indicators collected from electricity transformers from July 2016 to July 2018, including useful load, oil temperature, etc. Data points are recorded hourly for ETTh1 and ETTh2, while recorded every 15 minutes for ETTm1 and ETTm2.
\par \textbf{(2) Electricity}. Electricity dataset contains the hourly electricity consumption (in KWh) of 321 customers from 2012 to 2014.
\par \textbf{(3) Traffic}. Traffic dataset contains hourly road occupancy rate data measured by different sensors on San Francisco Bay area freeways in 2 years. The data is from the California Department of Transportation.
\par \textbf{(4) Illness}. Illness dataset includes 7 weekly recorded indicators of influenza-like illness patients data from Centers for
Disease Control and Prevention of the United States between 2002 and 2021.
\par \textbf{(5) Weather}. Weather dataset contains 21 meteorological indicators, like temperature, humidity, etc. The dataset is recorded every 10 minutes for the 2020 whole year.

\subsection{Baseline models} \label{baseline_models}
\par We briefly describe our selected baseline models:
\par \textbf{(1) Informer} \cite{Informer} utilizes ProbSparse self-attention and distillation operation to capture cross-time dependency.
\par \textbf{(2) Autoformer} \cite{Autoformer} utilizes series decomposition block architecture with Auto-Correlation to capture cross-time dependency.
\par \textbf{(3) FEDformer} \cite{FEDformer} presents a sparse
representation within the frequency domain and proposes frequency-enhanced blocks to capture the cross-time dependency.
\par \textbf{(4) ETSformer} \cite{ETSformer} exploits the principle of exponential smoothing and leverages exponential smoothing attention and frequency attention.
\par \textbf{(5) Crossformer} \cite{Crossformer} utilizes a dimension-segment-wise embedding and introduces a two-stage attention layer to capture the cross-time and cross-dimension dependency.
\par \textbf{(6) DLinear} \cite{LTSF-Linear} proposes a linear forecasting model, along with seasonal-trend decomposition or normalization operations.
\par \textbf{(7) PatchTST} \cite{PatchTST} utilizes Transformer encoders as model backbone, and proposes patching and channel independence techniques for better prediction.

\subsection{Hyper-parameters} \label{hyper-parameters}
\par For all experiments, during the training process, we use the same hyper-parameters as reported in the corresponding papers \cite{Informer, Autoformer, FEDformer, ETSformer, Crossformer, LTSF-Linear}, \eg, encoder/decoder layers, model hidden dimensions, head numbers of multi-head attention and batch size. 
\par As for hyper-parameters for adaptation process, there exists 4 major hyper-parameters for \method{SOLID}, including $\lambda_T$, $\lambda_P$, $\lambda_N$, $lr$ (We report the ratio $\nicefrac{lr}{lr_{\text{train}}}$ between adaptation learning rate $lr$ and the training learning rate $lr_{\text{train}}$ as an alternative).
We select the setting which performs the best on the validation set. The search range for the parameters is presented in Table \ref{tbl:grid-search}.

\subsection{Specific structures of $g_\Phi$ and $h_\theta$} \label{specific_structure_of_different_models}
\par We divide our baseline TSF models into three categories, including Encode-decoder-based, Encoder-based, and Linear-based architectures for explanation.

\begin{itemize}[leftmargin=*]
    \item Encode-decoder-based (\eg, Informer, Autoformer, FEDformer, ETSformer, and Crossformer): $h_\theta$ refers to the linear projection layer at the top of decoders, and the other parts of decoders and the entire encoders correspond to $g_\Phi$.
    \item Encoder-based (\eg, PatchTST): $h_\theta$ refers to the top linear prediction layer at the top of the encoders, and the other parts (including self-attention and embedding layers) correspond to $g_\Phi$.
    \item Linear-based (\eg, DLinear): Since Linear models only include linear layers for modeling, $h_\theta$ refers to the linear layers, and there is no $g_\Phi$.
\end{itemize}

\section{Further Experiment Results}

\subsection{Full results}\label{sec:appendix_full_result}

In this section, we report the full results of \sref{sec:main_result} and \sref{sec:further_analysis}. 

\begin{itemize}[leftmargin=*]
    \item Table \ref{tab:Ablation_study} details the ablation study's results across five models and three datasets.  
    \item Tables \ref{tbl:RevIN} and \ref{tbl:Dish-TS} compare the effectiveness of our approach against RevIN and Dish-TS in addressing distribution shifts.
    \item Table \ref{tbl:adapt_whole_model} compares the results of two tuning strategies, \ie, full parameter fine-tuning vs. prediction layer-only fine-tuning, applied to five models and three datasets. The prediction lengths are 24 (for Illness) and 96 (for Traffic and ETTh1).
    \item Table \ref{tbl:Main_Results_Part2} details the benchmark results for ETTm1, ETTm2, and Weather datasets, where our proposed \method{Reconditioner} identified relatively lower metrics.
\end{itemize}

\begin{table}[t]
  \centering
  \caption{Extended table of Figures \ref{fig:extra-ablation-illness}-\ref{fig:extra-ablation-traffic}: Ablation study on \method{SOLID}. }
  \resizebox{\linewidth}{!}{  
    \begin{tabular}{c|c|cc|cc|cc}
    \toprule
    \multicolumn{2}{c|}{\textbf{Dataset}}     & \multicolumn{2}{c|}{\textbf{Electricity}} & \multicolumn{2}{c|}{\textbf{Traffic}} & \multicolumn{2}{c}{\textbf{Illness}} \\
    \midrule
    \multicolumn{2}{c|}{\textbf{Metric}} & \textbf{MSE} & \textbf{MAE} & \textbf{MSE} & \textbf{MAE} & \textbf{MSE} & \textbf{MAE} \\
    \midrule
    \midrule
    \rotateablation{\textbf{Informer}} & \textbf{/} & 0.321  & 0.407  & 0.731  & 0.406  & 5.106  & 1.534  \\
          & \textbf{T} & 0.252  & 0.362  & 0.636  & 0.396  & 4.443  & 1.427  \\
          & \textbf{T+P} & 0.248  & 0.358  & \textbf{0.628 } & \textbf{0.393 } & 2.901  & 1.186  \\
          & \textbf{T+P+S} & \textbf{0.245 } & \textbf{0.355 } & \textbf{0.628 } & \textbf{0.393 } & \textbf{2.874 } & \textbf{1.150 } \\
    \midrule
    \rotateablation{\textbf{Autoformer}} & \textbf{/} & 0.207  & 0.324  & 0.620  & 0.391  & 3.314  & 1.245  \\
          & \textbf{T} & 0.193  & 0.309  & 0.579  & 0.380  & 3.333  & 1.251  \\
          & \textbf{T+P} & 0.196  & 0.309  & 0.577  & 0.380  & 2.981  & 1.201  \\
          & \textbf{T+P+S} & \textbf{0.189 } & \textbf{0.304 } & \textbf{0.565 } & \textbf{0.377 } & \textbf{2.737 } & \textbf{1.118 } \\
    \midrule
    \rotateablation{\textbf{FEDformer}} & \textbf{/} & 0.188  & 0.304  & 0.574  & 0.356  & 3.241  & 1.252  \\
          & \textbf{T} & 0.173  & 0.284  & 0.522  & 0.346  & 3.234  & 1.253  \\
          & \textbf{T+P} & \textbf{0.172 }  & \textbf{0.283 } & 0.520  & 0.345  & 2.953  & 1.189  \\
          & \textbf{T+P+S} & \textbf{0.172 } & 0.284  & \textbf{0.513 } & \textbf{0.344 } & \textbf{2.707 } & \textbf{1.123 } \\
    \midrule
    \rotateablation{\textbf{ETSformer}} & \textbf{/} & 0.187  & 0.304  & 0.599  & 0.386  & 2.397  & 0.993  \\
          & \textbf{T} & 0.174  & 0.289  & 0.493  & 0.356  & 2.377  & 0.989  \\
          & \textbf{T+P} & 0.173  & 0.287  & 0.489  & 0.356  & 2.334  & 0.977 \\
          & \textbf{T+P+S} & \textbf{0.171 } & \textbf{0.285 } & \textbf{0.487 } & \textbf{0.354 } & \textbf{2.262 } & \textbf{0.955 } \\
    \midrule
    \rotateablation{\textbf{Crossformer}} & \textbf{/} & 0.184  & 0.297  & 0.521  & 0.297  & 3.329  & 1.275  \\
          & \textbf{T} & 0.188  & 0.301  & 0.482  & 0.273  & 2.705  & 1.098  \\
          & \textbf{T+P} & 0.184  & 0.297  & 0.476  & 0.270  & 2.469  & 0.993  \\
          & \textbf{T+P+S} & \textbf{0.182 } & \textbf{0.295 } & \textbf{0.473 } & \textbf{0.267 } & \textbf{2.353 } & \textbf{0.986 } \\
    \bottomrule
    \end{tabular}%
    } 
  \label{tab:Ablation_study}%
  \vspace{-0.3cm}
\end{table}%

\begin{table}[t]
  \centering
  \caption{Extended table of Figure \ref{fig:extra-revin}: Comparison experiments between our proposed \method{SOLID} and RevIN for tackling distribution shift. The TSF model is Autoformer.}
  
  \resizebox{\linewidth}{!}{
    \begin{tabular}{c|c|cc|cc|cc}
    \toprule
    \multicolumn{2}{c|}{\textbf{Method}} & \multicolumn{2}{c|}{\textbf{+\method{SOLID}}} & \multicolumn{2}{c|}{\textbf{+RevIN}} & \multicolumn{2}{c}{\textbf{+RevIN +\method{SOLID}}} \\
    \midrule
    
    \multicolumn{2}{c|}{\textbf{Metric}} & \textbf{MSE}   & \textbf{MAE}   & \textbf{MSE}   & \textbf{MAE}   & \textbf{MSE}   & \textbf{MAE} \\
    \midrule
    \midrule
    ETTh1 & 96    & \textbf{0.430 } & \textbf{0.442 } & 0.465  & 0.453  & 0.462  & 0.450 \\
    ETTh2 & 96    & \textbf{0.362 } & 0.404  & 0.396  & 0.401  & 0.393  & \textbf{0.398 } \\
    ETTm1 & 96    & \textbf{0.443 } & \textbf{0.448 } & 0.555  & 0.479  & 0.553  & 0.477  \\
    ETTm2 & 96    & 0.312  & 0.345  & 0.232 & 0.307 & \textbf{0.230 } & \textbf{0.305 } \\
    Electricity & 96    & 0.189  & 0.304  & 0.182 & 0.289 & \textbf{0.176 } & \textbf{0.284 } \\
    Traffic & 96    & 0.565  & 0.376  & 0.637 & 0.392 & \textbf{0.552 } & \textbf{0.357 } \\
    Weather & 96    & 0.259  & 0.334  & 0.209 & 0.254 & \textbf{0.208 } & \textbf{0.253 } \\
    Illness & 24    & \textbf{2.737 } & 1.118  & 2.810  & 1.119 & 2.795  & \textbf{1.107 }\\
    \bottomrule
    \end{tabular}%
    }
  \label{tbl:RevIN}%
\end{table}%

\begin{table}[t]
  \centering
  \caption{Extended table of Figure \ref{fig:extra-dishts}: Comparison experiments between our proposed \method{SOLID} and Dish-TS for tackling distribution shift. The TSF model is Autoformer.}
  
  \resizebox{\linewidth}{!}{
    \begin{tabular}{c|c|cc|cc|cc}
    \toprule
    \multicolumn{2}{c|}{\textbf{Method}} & \multicolumn{2}{c|}{\textbf{+\method{SOLID}}} & \multicolumn{2}{c|}{\textbf{+Dish-TS}} & \multicolumn{2}{c}{\textbf{+Dish-TS +\method{SOLID}}}\\
    \midrule
    \multicolumn{2}{c|}{\textbf{Metric}} & \textbf{MSE} & \textbf{MAE} & \textbf{MSE} & \textbf{MAE} & \textbf{MSE} & \textbf{MAE}\\
    \midrule
    \midrule
    ETTh1 & 96    & 0.430  & 0.442  & 0.419  & 0.430  & \textbf{0.416 } & \textbf{0.428 }\\
    ETTh2 & 96    & \textbf{0.362 } & 0.404  & 0.369  & 0.394  & 0.366  & \textbf{0.393 } \\
    ETTm1 & 96    & \textbf{0.443 } & 0.448  & 0.488  & 0.452  & 0.486  & \textbf{0.449 } \\
    ETTm2 & 96    & 0.312  & 0.345  & 0.213  & 0.295  & \textbf{0.209 } & \textbf{0.291 } \\
    Electricity & 96    & 0.189  & 0.304  & 0.169  & 0.279  & \textbf{0.162 } & \textbf{0.273 } \\
    Traffic & 96    & 0.565  & 0.376  & 0.586  & 0.381  & \textbf{0.518 } & \textbf{0.345 } \\
    Weather & 96    & 0.259  & 0.334  & 0.194  & 0.244  & \textbf{0.192 } & \textbf{0.243 } \\
    Illness & 24    & 2.737  & 1.118  & 2.587  & 1.079  & \textbf{2.571 } & \textbf{1.063 }\\
    \bottomrule
    \end{tabular}%
    }
  \label{tbl:Dish-TS}%
\end{table}%

\begin{table}[t]
  \centering
  \caption{Extended table of Figure \ref{fig:extra-tuning}: Performance comparison between adaptation on prediction layer (PL) versus adaptation on entire model (EM).}
  \small
  \resizebox{\linewidth}{!}{  
\begin{tabular}{c|c|cc|cc|cc}
\toprule
\multicolumn{2}{c|}{\textbf{Dataset}}       & \multicolumn{2}{c|}{\textbf{ETTh1}} & \multicolumn{2}{c|}{\textbf{Traffic}} & \multicolumn{2}{c}{\textbf{Illness}} \\
\midrule
\multicolumn{2}{c|}{\textbf{Metric}} & \textbf{MSE} & \textbf{MAE} & \textbf{MSE} & \textbf{MAE} & \textbf{MSE} & \textbf{MAE} \\
\midrule
\midrule
\textbf{Informer} & \textbf{/} & 0.948  & 0.774  & 0.731  & 0.406  & 5.096  & 1.533  \\
          & \textbf{+\method{SOLID-PL}} & \textbf{0.684 } & \textbf{0.586 } & \textbf{0.628 } & \textbf{0.393 } & \textbf{2.874 } & \textbf{1.150 } \\
          & \textbf{+\method{SOLID-EM}} & 0.817  & 0.656  & 0.692  & 0.403  & 2.991  & 1.208  \\
\midrule
\textbf{Autoformer} & \textbf{/} & 0.440  & 0.444  & 0.621  & 0.391  & 3.314  & 1.245  \\
          & \textbf{+\method{SOLID-PL}} & \textbf{0.430 } & \textbf{0.442 } & \textbf{0.565 } & \textbf{0.376 } & \textbf{2.737 } & \textbf{1.118 } \\
          & \textbf{+\method{SOLID-EM}} & 0.436  & 0.443  & 0.597  & 0.385  & 2.891  & 1.150  \\
\midrule
\textbf{FEDformer} & \textbf{/} & 0.375  & 0.414  & 0.574  & 0.356  & 3.241  & 1.252  \\
          & \textbf{+\method{SOLID-PL}} & \textbf{0.370 } & \textbf{0.410 } & \textbf{0.513 } & \textbf{0.344 } & \textbf{2.707 } & \textbf{1.123 } \\
          & \textbf{+\method{SOLID-EM}} & 0.373  & 0.413  & 0.539  & 0.348  & 2.874  & 1.201  \\
\midrule
\textbf{ETSformer} & \textbf{/} & 0.495  & 0.480  & 0.599  & 0.386  & 2.397  & 0.993  \\
          & \textbf{+\method{SOLID-PL}} & \textbf{0.491 } & \textbf{0.478 } & \textbf{0.487 } & \textbf{0.354 } & \textbf{2.262 } & \textbf{0.955 } \\
          & \textbf{+\method{SOLID-EM}} & 0.492  & 0.479  & 0.517  & 0.368  & 2.353  & 0.986  \\
\midrule
\textbf{Crossformer} & \textbf{/} & 0.411  & 0.432  & 0.521  & 0.297  & 3.329  & 1.275  \\
          & \textbf{+\method{SOLID-PL}} & \textbf{0.382 } & \textbf{0.415 } & \textbf{0.473 } & \textbf{0.267 } & \textbf{2.397 } & \textbf{1.052 } \\
          & \textbf{+\method{SOLID-EM}} & 0.401  & 0.427  & 0.493  & 0.273  & 2.653  & 1.074  \\
\bottomrule
\end{tabular}%
}

    \label{tbl:adapt_whole_model}
\end{table}%

\begin{figure}[t]
    \centering
    \subfigure[context: temporal segments]{
        \label{fig:intro-confounder-time}
        \includegraphics[width=0.22\textwidth,trim={10 10 10 10},clip]{figure/period24_timesteps_pca_random.png}
    }
    \subfigure[context: periodic phases]{
        \label{fig:intro-confounder-phases}
        \includegraphics[width=0.22\textwidth,trim={10 10 10 10},clip]{figure/period24_6clusters_pca.png}
    }
    \caption{Illustration for two contexts on ETTh1 training dataset, which has a period of 24 and length of 8760. Different colors represent different values of contexts.}
    \label{fig:intro-confounder}
\end{figure}

\subsection{Visualizing the impact of contexts on data distribution}\label{sec:appendix_context_visual}

To demonstrate the influence of context on data distribution, we utilized the Autoformer \cite{Autoformer} to extract latent representations from the ETTh1 dataset and applied PCA for dimensionality reduction and visualization. This ensures the spatial positions of the data points in the figure represent their original distribution. 

We marked two observed contexts -- temporal segments and periodic phases -- on the figures. Unobserved contexts, being difficult to visualize, are not displayed. Figure \ref{fig:intro-confounder-time} reveals a progressive outward shift in data distribution with increasing temporal segments. Similarly, Figure \ref{fig:intro-confounder-phases} shows that changes in the periodic phase lead to a rotational shift in data distribution. Note that if CDS doesn't exist, different colors (denoting contexts) should be scattered and randomly mixed in the figure since colors are independent of spatial positions, which is not the case shown in Figure \ref{fig:intro-confounder}. Therefore, it is evident that these contexts markedly affect data distribution.

\balance

\subsection{Correlation of $\delta_T$ and MAE improvement}\label{sec:app_corr_t}

\begin{figure}[t]
    \centering
    \includegraphics[width=0.47\textwidth,trim={10 10 10 10},clip]{figure/exp_correlation_time.pdf}
    \caption{The relationship $\bm{\log_{10} \delta_T}$ (X-axis) and MAE improvements achieved by \method{SOLID} (Y-axis) for 8 datasets and 6 models. }
    \label{fig:indicator-time}
\end{figure}

In \sref{sec:relation_reconditionor_solid}, we found that the correlation between $\delta_P$ and MAE improvement is very strong. We display the correlation between $\delta_T$ and MAE improvement in Figure \ref{fig:indicator-time} as well. Unlike $\delta_P$, the relation between $\delta_T$ and MAE improvement is not as straightforward. Despite showing an increasing trend in Figure \ref{fig:indicator-time}, the Spearman correlation coefficient is merely 0.2129. This implies that, while there is evidence of CDS stemming from temporal segments as detected by \method{Reconditionor}, \method{SOLID} is comparatively less effective at mitigating it when compared to CDS caused by periodic phases. One possible explanation is that data generated within the same phase tends to follow a more predictable pattern, while data within the same temporal segment exhibits greater diversity and uncertainty, which may limit the utility of selecting data from the same segment to address CDS caused by temporal segments. We leave further investigation of it to future work.

\subsection{Parameter sensitivity analysis} \label{sec:appendix-parameter-sensitivity}

\par Within our proposed approach, several crucial parameters are presented, including $\lambda_T$, which controls the time range of preceding data for selection; $\lambda_P$, which governs the acceptable threshold for periodic phase difference; $\lambda_N$, which determines the number of similar samples to be selected for model adaptation; And $lr$, which regulates the extent of adaptation on the prediction layer for the models. The search range for these parameters is presented in Table \ref{tbl:grid-search}. The results of parameter sensitivity analysis are visually presented in Figure \ref{fig:parameter_sensitivity}.

\par Firstly, we discover that our proposed \method{SOLID} is insensitive to $\lambda_T$ and $\lambda_P$ parameters, based on the results obtained.
Regarding the $\lambda_N$ parameter, the selection of insufficient samples would increase the variance during adaptation due to the data shortage. Conversely, selecting excessive samples carries the risk of including samples with unobserved irrelevant contexts, thereby deteriorating the model’s performance.
Lastly, For parameter $lr$, a very small $lr$ leads to inadequate model adaptation, preventing the model from effectively addressing CDS and resulting in a bias towards the test sample (Theorem \ref{thm:error-global}). Conversely, a too large value for $lr$ can lead to excessive adaptation, which also risks bringing in substantial variance to the model (Theorem \ref{thm:error-context}). Therefore, a well-selected learning rate will contribute to an optimal trade-off between bias and variance.

\subsection{Case study} \label{sec:appendix-visulization}

\par In addition, we conduct a case study and visualize our proposed method on various cases across different datasets and models, as presented in Figure \ref{fig:visualization-of-prediction}.
\par Specifically, we plot the figures on these combinations: Illness (timestep 120~\&~160, variate 7) on Crossformer, Traffic (timestep 200~\&~550, variate 862) on FEDformer, Electricity (timestep 1500~\&~3000, variate 321) on Autoformer, and ETTh1 (timestep 250~\&~650, variate 7) on Informer, corresponding to Figure \ref{fig:visualization-of-prediction} (a)-(h).
The visualization vividly illustrates the effectiveness of our approach in improving forecasting performance.

\begin{figure*}[t]
    \centering  
    \caption{Case study and visualization of our proposed methods on various cases across different datasets and models, where \textcolor[rgb]{0, 0.439, 0.753}{BLUE} lines represent ground-truth, \textcolor{orange}{ORANGE} lines represent original forecasting results, and \textcolor[rgb]{0, 0.5, 0}{GREEN} lines represent the forecasts after employing our approach. Besides, $ts120$ denotes this case is sampled at timestep-120 in the test set, and so on.}
    \subfigure[Illness-ts120]{
        \includegraphics[width=0.22\textwidth]{figure/Illness_Crossformer_120.pdf}
    }
    \subfigure[Illness-ts160]{
        \includegraphics[width=0.22\textwidth]{figure/Illness_Crossformer_160.pdf}
    }
    \subfigure[Traffic-ts200]{
        \includegraphics[width=0.22\textwidth]{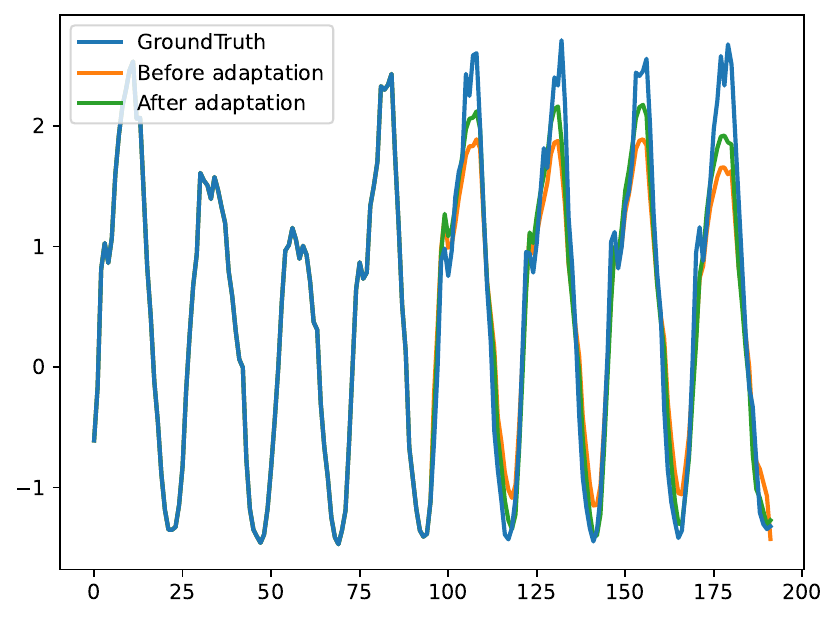}
    }
    \subfigure[Traffic-ts550]{
        \includegraphics[width=0.22\textwidth]{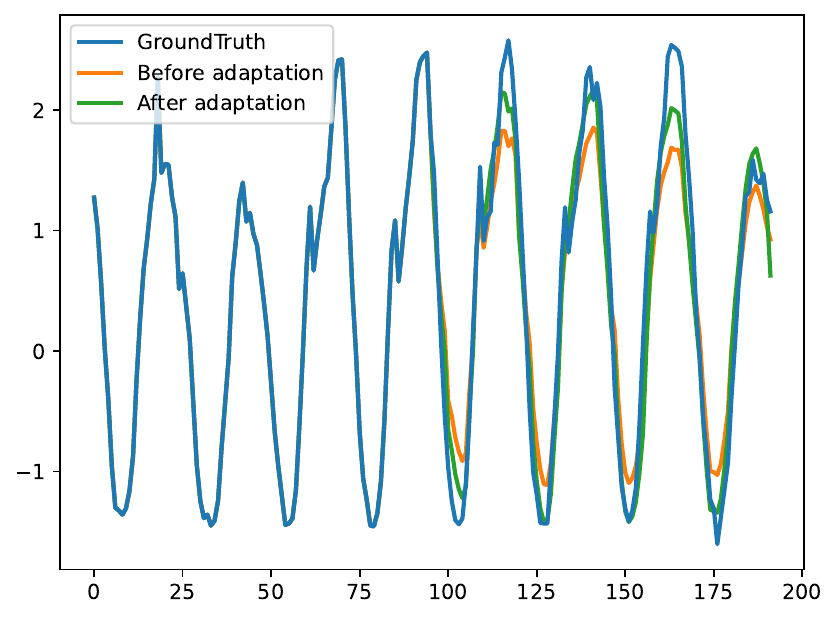}
    }
    \subfigure[Electricity-ts1500]{
        \includegraphics[width=0.22\textwidth]{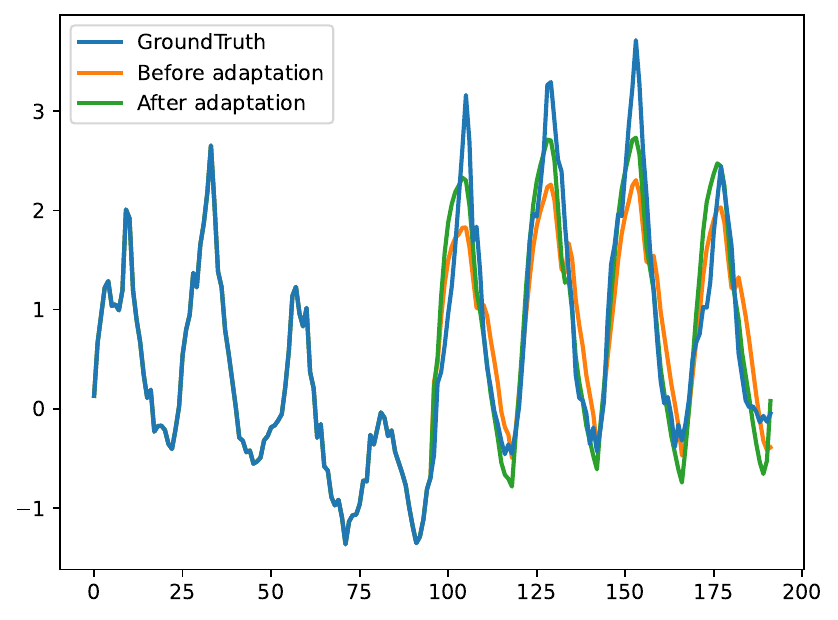}
    }
    \subfigure[Electricity-ts3000]{
        \includegraphics[width=0.22\textwidth]{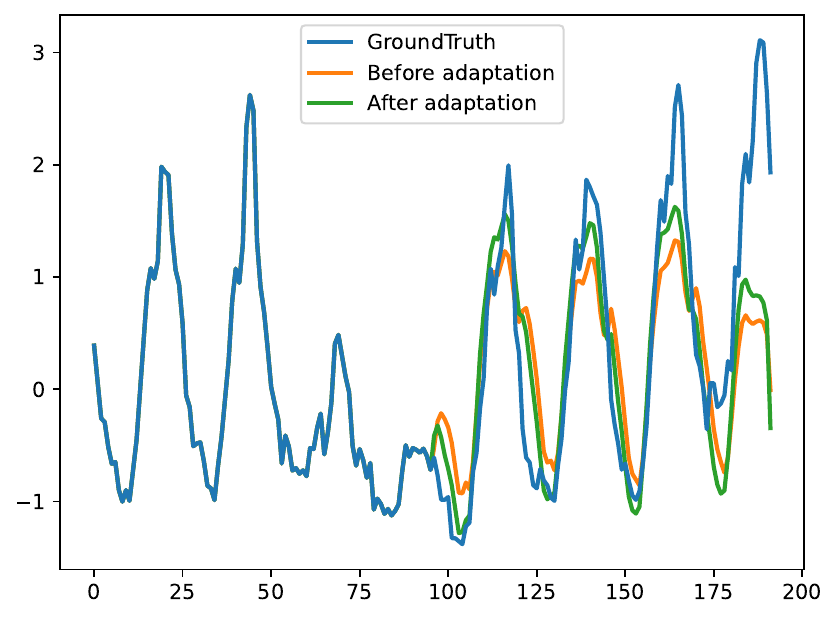}
    }
    \subfigure[ETTh1-ts250]{
        \includegraphics[width=0.22\textwidth]{figure/ETTh1_Informer_250.pdf}
    }
    \subfigure[ETTh1-ts650]{
        \includegraphics[width=0.22\textwidth]{figure/ETTh1_Informer_650.pdf}
    }
    \label{fig:visualization-of-prediction}
    \vspace{-0.3cm}
\end{figure*}

\fi

\end{document}
\endinput